\newtheorem{theorem}{Theorem}[section]
\newtheorem{lemma}[theorem]{Lemma}
\newtheorem{assumption}[theorem]{Assumption}
\journal{\textbf{Neurocomputing (2021), doi: https://doi.org/10.1016/j.neucom.2021.10.036} }
\begin{document}

\begin{frontmatter}

\title{Deep Kronecker neural networks: A general framework for neural networks with adaptive activation functions}


\author{Ameya D. Jagtap$^{1,*}$, Yeonjong Shin$^1$, Kenji Kawaguchi$^2$, George Em Karniadakis$^{1,3}$}
\cortext[mycorrespondingauthor]{Corresponding author Email:   ameyadjagtap@gmail.com, ameya$\_$jagtap@brown.edu (A.D.Jagtap), yeonjong$\_$shin@brown.edu (Y. Shin), kkawaguchi@fas.harvard.edu (K. Kawaguchi),  george$\_$karniadakis@brown.edu (G.E.Karniadakis), \\ 
}

\address{$^1$ Division of Applied Mathematics, Brown University, 182 George Street, Providence, RI 02912, USA}
\address{$^2$ Center of Mathematical Sciences and Applications, Harvard University, Cambridge, MA, 02138, USA.}
\address{$^3$ School of Engineering, Brown University, Providence, RI 02912, USA.}

\begin{abstract}
We propose
a new type of neural networks, Kronecker neural networks (KNNs), that form a general framework for neural networks with
adaptive activation functions. KNNs employ the Kronecker product, which provides an efficient way of constructing a very wide network while keeping the number of parameters  low. Our theoretical analysis reveals that under suitable conditions, KNNs induce a faster decay of the loss than that by the feed-forward networks. This is also empirically verified through a set of computational examples. Furthermore, under certain technical assumptions, we establish global convergence of gradient descent for KNNs. As a specific case, we propose the \textit{Rowdy} activation function that is designed to get rid of any saturation region by injecting sinusoidal fluctuations, which include trainable parameters. The proposed Rowdy activation function can be employed in any neural network architecture like feed-forward neural networks, Recurrent neural networks, Convolutional neural networks etc. The effectiveness of KNNs with Rowdy activation is demonstrated through 
various computational experiments including function approximation using feed-forward neural networks, solution inference of partial differential equations using the physics-informed neural networks, and standard deep learning benchmark problems using convolutional and fully-connected neural networks. 
\end{abstract}

\begin{keyword}
Deep neural networks,  Kronecker product, Rowdy activation functions, Gradient flow dynamics, physics-informed neural networks, Deep learning benchmarks
\end{keyword}

\end{frontmatter}

\linenumbers

\section{Introduction}
Neural networks have been very effective in diverse applications of machine learning and scientific machine learning \cite{lecun2015deep}. Undoubtedly, how to design neural networks plays a central role in efficient training \cite{leijnen2020neural}. It has been widely known that some network architectures can be trained well and also be generalized well \cite{he2016deep}. In training neural networks, there are many known open issues, such as the vanishing and exploding gradient and the plateau phenomenon \cite{lu2019dying,ainsworth2020plateau,hanin2018neural}. There are some theoretical works claiming that over-parameterized neural networks
trained by gradient descent can achieve a zero training loss \cite{allen2018convergence,du2018gradient-shallow,du2018gradient-DNN,oymak2019towards,zou2018stochastic,Jacot_18NTK}. However, in practice, the possible training time is always limited and one needs to leverage between the size of neural networks and the number of epochs of gradient-based optimization. 

It has been empirically found that a well-chosen activation function can help gradient descent to not only converge fast 
but also to generalize well \cite{goodfellow2016deep}. 
A representative example is the rectified linear unit (ReLU) activation that achieves state-of-the-art performance in many image classification problems \cite{krizhevsky2012imagenet}, and it has been one of the most popular activation functions for image classification problems. However, there is no rule of thumb of choosing an optimal activation function. 
This has motivated the use of adaptive activation functions by our group and others, see \cite{agostinelli2014learning,hou2017convnets,jagtap2019adaptive,jagtap2019adaptive,jagtap2020locally,zamora2019adaptive,goyal2019learning,sitzmann2020implicit}, with varying results demonstrating superior performance over non-adaptive fixed activation functions
in various learning tasks.

In the present work, we propose a new type of neural networks, the Kronecker neural networks (KNN),
that utilizes the Kronecker product \cite{graham2018kronecker} in the construction of the weight matrices.
We show that KNN provides a general framework for neural networks with adaptive activation functions,
and many existing ones become special instances of KNNs.
As a matter of fact, the KNN is equivalent to the standard feed-forward neural networks (FNN) with a general adaptive activation function of the following form:
\begin{equation}
    \phi_{\alpha,\omega}(x) = \sum_{k=1}^K \alpha_k \phi_k(\omega_k x), \qquad 
    K \in \mathbb{N}_{\ge 1}, \quad \alpha=(\alpha_k), \quad \omega=(\omega_k),
\end{equation}
where $\phi_k$'s are fixed activation functions and
$\alpha, \omega$ are parameters that could be either trainable or fixed.
Hence, the implementation of the KNN does not require that the Kronecker product actually be computed.
However, the Kronecker product allows one to construct a much wider network than a FNN,
while maintaining almost the same number of parameters.

The main findings of our work are summarized below:
\begin{itemize}
    \item By analyzing the gradient flow dynamics of two-layer networks, 
          we prove theoretically that at least in the beginning of training, 
          the loss by KNNs is strictly smaller than the loss by the FNNs. 
    \item We establish global convergence of gradient flow dynamics for the two-layer KNNs under certain technical conditions. 
    \item We propose the adaptive Rowdy activation functions, which is a particular case of a more general KNN framework. In this case, we choose $\{\phi_1\}$ to be any standard activation function such as ReLU, tanh, ELU, sine, Swish, Softplus, etc., and the remaining $\{\phi_k\}_{k=2}^K$ activation functions are chosen as sinusoidal harmonic functions. The purpose of choosing such sinusoidal functions is to inject bounded but highly non-monotonic, noisy effects to remove the saturation regions from the output of each layer in the network, thereby allows the network to explore more and learn faster. 
\end{itemize}

One of the main weaknesses of deep as well as physics-informed neural networks \cite{raissi2019physics} is related to the problem of spectral bias \cite{rahaman2019spectral, cao2019towards}, which prevents them from learning the high-frequency components of the approximated functions. To overcome this problem a few approaches have been proposed in the literature. In \cite{wang2020multi, liu2020multi} the authors introduced appropriate input scaling factors to convert the problem of
approximating high frequency components to lower frequencies. Tancik et al. \cite{tancik2020fourier} introduced Fourier features networks that can learn high-frequency functions by use of Fourier feature mapping. More recently, Wang et al. \cite{wang2020eigenvector} proposed novel architectures that employ spatio-temporal and multi-scale random Fourier features to learn high-frequencies involved in the target functions. With the proposed Rowdy activation functions, the  high-frequency components in the target function can be captured by introducing the high frequency sinusoidal fluctuations in the activation functions. Moreover, the Rowdy activations can be implemented easily in any neural network architecture such as feed forward neural networks, convolutional neural networks, recurrent neural networks and the more recently proposed DeepOnets \cite{lu2021learning}.
To demonstrate the performance of the Rowdy activation functions and to computationally justify our theoretical findings, 
a number of computational examples 
are presented from function approximation, solving partial differential equations,
as well as standard benchmark problems in machine learning.
We found that the KNNs are effectively trained by gradient-based optimization methods
and outperform standard FNNs in all the examples we considered here.

The remainder of the paper is organized as follows. 
In Section 2 we present the mathematical setup and propose the Kronecker neural networks. In Section 3  
we present theoretical results, and in Section 4 we report various computational examples for function approximation, inferring the solution of partial differential equations and  standard deep learning benchmark problems. Finally, we conclude in Section 5 with a 
summary.

\section{Mathematical Setup and Kronecker Neural Networks}
A feed-forward neural network of depth $D$ is a function defined through 
a composition of multiple layers consisting of 
an input layer, $D-1$ hidden-layers and an output layer. 
In the $l^{th}$ hidden-layer, $N_l$ number of neurons are present. 
Each hidden-layer receives an output ${z}^{l-1} \in\mathbb{R}^{N_{l-1}}$ from the previous layer, where an affine transformation
\begin{equation}\label{afft}
\mathcal{L}_l ({z}^{l-1}) \triangleq {W}^l {z}^{l-1} + {b}^l
\end{equation}
is performed.
Here, $W^l \in\mathbb{R}^{N_l \times N_{l-1}}$ is the weight matrix and $b^l \in\mathbb{R}^{N_{l}}$ is the bias vector associated with the $l^{th}$ layer.
A nonlinear activation function $\phi_1(\cdot)$ is applied to each component of the transformed vector before sending it as an input to the next layer. The activation function is an identity function after an output layer. Thus, the final neural network representation is given by 
\begin{equation} \label{def:FNNN}
    u^{\text{FF}}({z}) = (\mathcal{L}_D \circ \phi_1 \circ \mathcal{L}_{D-1} \circ \ldots \circ \phi_1 \circ \mathcal{L}_1 )({z}),
\end{equation}
where the operator $\circ$ is the composition operator.
Let $\Theta_{FF} = \{{W}^l, {b}^l\}_{l=1}^D$, which represents the trainable parameters in the network.

For a vector $v = [v_1,\cdots, v_n]^T \in \mathbb{R}^n$,
let us recall the various norms of $v$:
\begin{equation*}
    \|v\|_1 = \sum_{i=1}^n |v_i|, \qquad
    \|v\|^2 = \sum_{i=1}^n v_i^2, \qquad
    \|v\|_{\infty} = \max_{1\le i \le n} |v_i|.
\end{equation*}
For a matrix $M \in \mathbb{R}^{m\times n}$ where $m \ge n$,
let $\sigma_{\min}(M)$ be the $n$-th largest singular value of $M$.
Also, the spectral norm and the Frobenius norm are defined as 
\begin{equation*}
    \|M\| = \max_{\|x\| = 1} \|Mx\|, \qquad \|M\|_F^2 = \sum_{i=1}^m \sum_{j=1}^n M_{ij}^2, 
\end{equation*}
respectively, where $M_{ij}$ is the $(i,j)$-component of $M$.
Let $\bm{1}_{s \times t}$ be the matrix of size $s \times t$
whose entries are all 1s.

\subsection{Kronecker Neural Networks}
Let $K$ be a fixed positive integer.
Given a FNN's parameters
$\Theta_{FF} = \{\mathbf{W}^l, \mathbf{b}^l\}_{l=1}^D$,
let us define 
the $l$-th block weight matrix and block bias vector, respectively, by
\begin{align*}
    \bm{1}_{K \times K} \otimes W^l = \begin{bmatrix}
    W^l & \cdots & W^l \\
    \vdots &  \ddots & \vdots \\
    W^l &\cdots & W^l 
    \end{bmatrix} \in \mathbb{R}^{N_lK \times N_{l-1}K},
    \qquad
    \bm{1}_{K \times 1} \otimes b^l = \begin{bmatrix}
    b^l \\
    \vdots \\
    b^l
    \end{bmatrix} \in \mathbb{R}^{N_l K},
\end{align*}
where $\otimes$ is the Kronecker product.
Let us define a block activation function $\vec{\phi}$ that applies block-wise.
That is, for $z_j \in \mathbb{R}^{n}$ for $1 \le j \le K$, 
let $z = [z_1, \cdots, z_K]^T \in \mathbb{R}^{nK}$
and 
\begin{equation*}
    \vec{\phi}(z) = \begin{bmatrix}
    \phi_1(z_1) \\ \vdots \\ \phi_K(z_K)
    \end{bmatrix},
\end{equation*}
where $\phi_j$'s are activation functions applied element-wise.
We then construct a neural network from the block weight matrices and block bias vectors as follows:
Let $z^0 = z$ be the input and $z^1 = (\bm{1}_{K \times K} \otimes W^1)(\bm{1}_{K \times 1} \otimes z^0) + \bm{1}_{K\times 1} \otimes b^1$.
For $2 \le l < D$,
\begin{align*}
    z^l = (\bm{1}_{K \times K} \otimes W^l)\vec{\phi}(z^{l-1}) + \bm{1}_{K \times 1} \otimes b^l,
\end{align*}
and $z^D = (\bm{1}_{1 \times K} \otimes W^D)\vec{\phi}(z^{D-1}) + b^D$.
Then, $z^D$ is a $D$-layer FNN having $KN_l$ number of neurons at the $l$-th layer, while keeping 
the number of network's parameters the same as $u^{FF}$ in Eq. \eqref{def:FNNN}.

In order to properly scale the block weight matrices and the block bias vectors,
we introduce the scaling parameters $\omega^l, \alpha^l \in \mathbb{R}^K$.
Here $\omega^l$ is a column vector and $\alpha^l$ is a row vector.
The scaled block weight matrices and block bias vectors are
given by
\begin{align*}
    \tilde{W}^l =(\omega^l \otimes \alpha^l) \otimes W^l,
    \qquad
    \tilde{b}^l = \omega^l \otimes b^l, \qquad 1\le l < D,
\end{align*}
and $\tilde{W}^D = (\bm{1}_{1 \times K} \otimes W^D)$
and $\tilde{b}^D = b^D$.
For $1 \le l \le D$, let
\begin{equation*} 
    z^l:=\tilde{\mathcal{L}}_l(z^{l-1}) = \tilde{W}^l z^{l-1} + \tilde{b}^l.
\end{equation*}
We then obtain the representation given by
\begin{equation} \label{def:Kroneck}
    u_{\Theta}^{\mathcal{K}}({z}) = (\tilde{\mathcal{L}}_D \circ \vec{\phi} \circ \tilde{\mathcal{L}}_{D-1} \circ \ldots \circ \vec{\phi} \circ \tilde{\mathcal{L}}_1 )(\bm{1}_{K \times 1} \otimes z).
\end{equation}
We refer to this representation as a \textit{Kronecker neural network}.
The set of the network's parameters
is $\Theta_{\mathcal{K}} = \{W^l, b^l\}_{l=1}^D \cup \{\omega^l, \alpha^l\}_{l=1}^{D-1}$.

We note that the number of neurons in each hidden-layers of the Kronecker networks is $K$-times larger than those of the feed-forward (FF) networks.
However, the total number of parameters only differ by $2K(D-1)$ due to the Kronecker product.
Furthermore, the Kronecker network 
can be viewed as a new type of neural networks
that generalize a class of existing feed-forward neural networks,
in particular, 
to utilize adaptive activation functions,
as shown below.

\begin{itemize}
    \item If $K=1$, $\omega^l = \alpha^l = 1$ for all $l$,
    the Kronecker network 
    becomes a standard FF network \eqref{def:FNNN}.
    \item If $K=2$, $\omega^l_1 = 1$, $\omega^l_2 = \omega_2$ for all $l$, 
    $\phi_1(x) = \max\{x, 0\}$,
    and $\phi_2(x) = \max\{-x,0\}$,
    the Kronecker network 
    becomes a FF network with Parametric ReLU activation \cite{he2015delving}. 
    \item If $K=2$, $\omega^l_2 = \omega$ for all $l$, 
    $\phi_1(x) = \max\{x, 0\}$,
    and $\phi_2(x) = (e^x - 1)\cdot \mathbb{I}_{x \le 0}(x)$,
    the Kronecker network 
    becomes a FF network with Exponential Linear Unit (ELU)  activation \cite{clevert2015fast}
    if $\omega^l_1 = 1$ for all $l$, 
    and 
    becomes a FF network with Scaled Exponential Linear Unit (SELU) activation \cite{klambauer2017self} 
    if $\omega^l_1 = \omega'$ for all $l$.
    \item If $K = 1$, the Kronecker network 
    becomes a feed-forward neural network with 
    layer-wise locally adaptive activation functions \cite{jagtap2019adaptive, jagtap2020locally}.
    \item If $\omega^l = 1$ for all $l$ and 
    $\phi_k(x) = x^{k-1}$ for all $k$, 
    the Kronecker network 
    becomes a feed-forward neural network with 
    self-learnable activation functions (SLAF) \cite{goyal2019learning}.
    Similarly, a FNN with smooth adaptive activation function \cite{hou2017convnets}
    can be represented by a Kronecker network.
\end{itemize}

The Kronecker network can be efficiently implemented without constructing
the block weight matrices $\{\tilde{W}^l\}_l$ and block bias vectors $\{\tilde{b}^l\}_l$.
It can be checked that 
the Kronecker neural network can be expressed by the composition 
$$ 
u^{\mathcal{K}}({z}) = \left(\mathcal{L}_D \circ \tilde{\phi}^{D-1} \circ \mathcal{L}_{D-1} \circ \ldots \circ \tilde{\phi}^1 \circ \mathcal{L}_1 \right)({z}),
$$
where the activation function at the $l$-th layer is no longer deterministic but depends on the trainable parameters $\{\omega^l, \alpha^l\}$
\begin{align*}
    \tilde{\phi}^l(\mathcal{L}_l(z);\omega^l,\alpha^l) = 
    \sum_{k=1}^K \alpha_k^l \phi_k(\omega^l_k \mathcal{L}_l(z)),~~ l = 1,\cdots, D-1.
\end{align*}

Figure \ref{fig:RowdyNN} shows a schematic of a three-layer Kronecker neural network.
\begin{figure} [htpb] 
\centering
\includegraphics[trim=6cm 7cm 10.6cm 3.5cm, clip=true, scale=0.75, angle = 0]{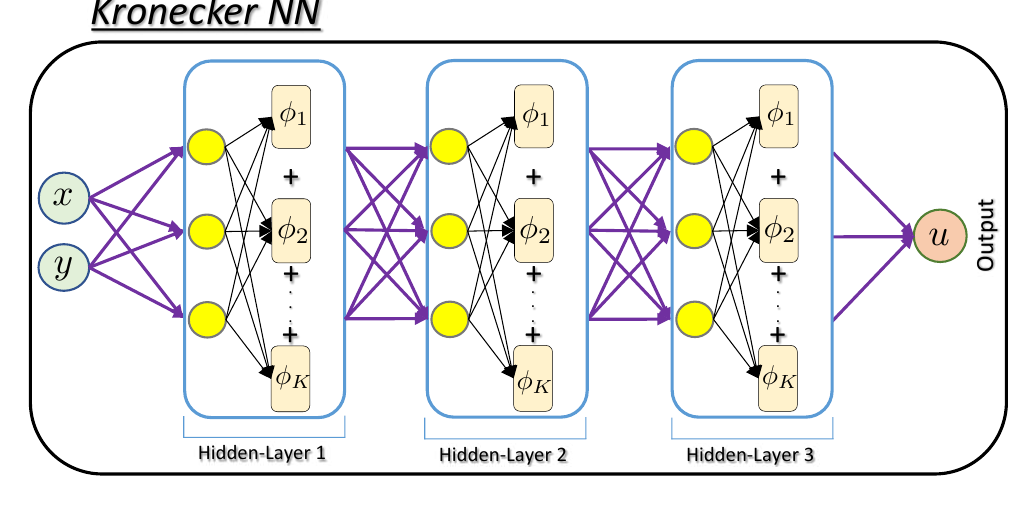}
\caption{Schematic of a three hidden-layers Kronecker neural network. The yellow circles represent the neurons in the respective hidden-layers. Unlike the traditional neural network architecture, the output of the neuron in the KNN architecture passes to more than one activation functions.
}
\label{fig:RowdyNN}
\end{figure}

\section{Gradient Flow Analysis of the Kronecker networks}
In this section, we analyze 
Kronecker networks
in the setup of the supervised learning with the square loss function.
Let $\mathcal{T}_m=\{({x}_i, {y}_i)\}_{i=1}^m$ be a set of $m$-training data points.
The square loss is defined by
\begin{equation}
    L(\Theta) = \frac{1}{2} \sum_{j=1}^m (u^\text{Type}_{\Theta}(x_i) - y_i)^2.
\end{equation}
Here $u^\text{Type}_{\Theta}(x)$ is a selected network that could be either 
a standard FF network $u^\text{FF}_{\Theta}(x)$ \eqref{def:FNNN} or 
a Kronecker network $u^{\mathcal{K}}_{\Theta}(x)$ \eqref{def:Kroneck}.
Specifically, we consider two-layer networks:
\begin{align*}
        u^{\text{FF}}_{\Theta_{FF}}(x) = \sum_{i=1}^N c_i \phi_1(w_i^Tx + b_i),
        \qquad 
        u^{\mathcal{K}}_{\Theta_{\mathcal{K}}}(x) = 
        \sum_{i=1}^N c_i \left[ \sum_{k=1}^K \alpha_k\phi_k(\omega_k(w_i^Tx + b_i))\right].
\end{align*} 
Their corresponding network parameters are denoted by
$\Theta_{FF} = \{c_i, w_i, b_i\}_{i=1}^N$
and $\Theta_{\mathcal{K}} = \{c_i, w_i, b_i\}_{i=1}^N \cup \{\alpha_k, \omega_k\}_{k=1}^K$,
respectively.
The goal of learning is to find the network parameters that minimize the loss function:
\begin{equation} \label{def:problem}
    \min_{\Theta_{Type}} L(\Theta_{Type}) \qquad \text{where} \qquad
    \text{Type} = \text{FF or } \mathcal{K}. 
\end{equation}
The gradient descent algorithm is typically applied to solve the minimization problem \eqref{def:problem}.
The algorithm commences with an initialization of the parameters, $\Theta^{(0)}$.
At the $k$-th iteration, the parameters are updated according to
\begin{align*}
    \Theta^{(k)} &= \Theta^{(k-1)} - \eta_k \nabla_{\Theta} L(\Theta)\big|_{\Theta = \Theta^{(k)}}, 
\end{align*}
where $\eta_k > 0$ is the learning rate for the $k$-th iteration.
The learning rates are typically chosen to be small enough to ensure the convergence.
By letting $\eta_k \to 0$, we obtain the gradient flow dynamics, the continuous version of gradient descent.
The gradient flow dynamics describes the evolution of the network parameters that change continuously in time:
\begin{equation} \label{def:GradFlow}
    \begin{split}
        \dot{\Theta}(t) = -\nabla_{\Theta} L(\Theta(t)),
        \qquad t \ge 0,
        \qquad
        \Theta(0) = \Theta^{(0)}.
    \end{split}
\end{equation}
The loss function, with a slight abuse of notation, is written as $L(t)$.
If the Kronecker network is employed, we write the loss function as
$L^{\mathcal{K}}(t)$;
if the FF network is employed, we write the loss function as 
$L^\text{FF}(t)$.

For the analysis, we make the following assumptions on the parameter initialization and the activation functions.
\begin{assumption} \label{assumption:init}
    Let $c = [c_1,\cdots, c_N]^T$ and $c_i$'s are independently initialized from a continuous probability distribution that is symmetric around 0.
   Also, $v_i = [w_i; b_i]$'s are independently initialized from a normal distribution $N(0,I_{d+1})$,
   where $I_N$ is the identity matrix of size $N$.
   Let $\omega = \bm{1}_{K \times 1}$, where 
    $\bm{1}_{s \times t}$ is the matrix of size $s \times t$
    whose entries are all 1s.
\end{assumption}

\begin{assumption} \label{assumption:activation}
    Let $\phi_k \in C^1(\mathbb{R})$ for all $k=1,\cdots, K$.
    Let $K \ge m$.
    For any distinct $m$ data points $\{z_j\}_{j=1}^m$ in $\mathbb{R}$,
    the $K \times m$ 
    matrix $\bm{\Phi}$
    is full rank,
    where it is defined as 
    \begin{align*}
        [\bm{\Phi}]_{kj} = \phi_k(z_j), \qquad 1\le k \le K, 1 \le j \le m.  
    \end{align*}
\end{assumption}



Suppose that a short period of training time is allowed.
Due to the limited computational resources,
we may frequently encounter such time-limitation scenarios.
Firstly, we are interested in understanding which networks, between the Kronecker and the FF, is more favorable for the training in terms of the loss.
Will there be any advantages of using the Kronecker network over the standard feed-forward?
In what follows, we show that the Kronecker network produces a smaller loss than
that by the standard FF network at least during the early phase of the training.
To fairly compare two networks, we consider 
the following initialization 
that makes $L^{\mathcal{K}}(0) = L^{\text{FF}}(0)$.
For any FF initialization $\Theta_{FF}(0)$,
we initialize the Rowdy network as follows:
${\Theta}_{\mathcal{K}}(0) = {\Theta}_{\text{FF}}(0) \cup 
\{\omega(0), \alpha(0)\}$
where 
\begin{equation} \label{init:1}
    \omega(0) = \bm{1}_{K \times 1}, \qquad
\alpha(0) = \begin{bmatrix}
1 & 0 & \cdots & 0
\end{bmatrix}.
\end{equation}
This makes the two networks at the initialization identical,
which leads to the identical loss value $L^{\mathcal{K}}(0) = L^{\text{FF}}(0)$.

\begin{theorem} \label{thm:small-loss}
    Suppose Assumptions~\ref{assumption:activation} and ~\ref{assumption:init}
    are satisfied.
    Suppose $\alpha$ and $\omega$ are initialized according to \eqref{init:1}.
    Then, with probability $1$ over initialization, 
    there exists $T > 0$ such that 
    \begin{align*}
        L^{\mathcal{K}}(t) < L^{\text{FF}}(t), \qquad \forall t \in (0, T).
    \end{align*}
\end{theorem}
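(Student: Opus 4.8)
The plan is to compare the two losses through their time derivatives at $t=0$. First I would observe that under the choice \eqref{init:1} we have $\alpha_1(0)=\omega_k(0)=1$ and $\alpha_k(0)=0$ for $k\ge 2$, so $u^{\mathcal K}_{\Theta_{\mathcal K}(0)}\equiv u^{\mathrm{FF}}_{\Theta_{FF}(0)}$ as functions; hence $L^{\mathcal K}(0)=L^{\mathrm{FF}}(0)$ and the two networks share the residual vector $r=\bigl(u^{\mathrm{FF}}_{\Theta_{FF}(0)}(x_j)-y_j\bigr)_{j=1}^{m}$. A short computation would also give $\partial_\theta u^{\mathcal K}\big|_{t=0}=\partial_\theta u^{\mathrm{FF}}$ for every shared parameter $\theta\in\{c_i,w_i,b_i\}$, so the gradients of $L$ with respect to the shared parameters coincide at $t=0$. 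Since along gradient flow $\dot L(t)=-\|\nabla_\Theta L(\Theta(t))\|^2$, splitting $\Theta_{\mathcal K}=\Theta_{FF}\cup\{\alpha,\omega\}$ yields
\[
\dot L^{\mathcal K}(0)-\dot L^{\mathrm{FF}}(0)=-\bigl(\|\nabla_\alpha L\|^2+\|\nabla_\omega L\|^2\bigr)\big|_{t=0}\;\le\;-\,\|\nabla_\alpha L\|^2\big|_{t=0}.
\]
Thus the whole theorem reduces to showing that $\nabla_\alpha L\big|_{t=0}\neq 0$ with probability one.

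To that end I would write $z_{ij}=w_i^{T}x_j+b_i$ and let $\bm{\Phi}_i\in\mathbb R^{K\times m}$ have entries $[\bm{\Phi}_i]_{kj}=\phi_k(z_{ij})$; differentiating $u^{\mathcal K}$ in $\alpha$ and using $\omega_k(0)=1$ gives $\nabla_\alpha L\big|_{t=0}=\sum_{i=1}^N c_i\,\bm{\Phi}_i\,r$. Then I would establish two almost-sure facts: (i) $r\neq 0$ — at a random initialization the event $u^{\mathrm{FF}}_{\Theta_{FF}(0)}(x_j)=y_j$ for all $j$ confines $c=(c_i)$ to a proper affine subspace, which has probability zero because the $c_i$ are independent with no atoms; and (ii) for each fixed $i$ the nodes $z_{i1},\dots,z_{im}$ are pairwise distinct almost surely (the $x_j$ are distinct and $w_i$ is a nondegenerate Gaussian), so Assumption~\ref{assumption:activation} forces $\bm{\Phi}_i$ to have rank $m$, hence to be injective on $\mathbb R^m$, hence $\bm{\Phi}_i r\neq 0$. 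It follows that $c\mapsto\sum_i c_i(\bm{\Phi}_i r)$ is a linear map on $\mathbb R^N$ all of whose coefficient vectors $\bm{\Phi}_i r$ are nonzero, so its kernel is a proper linear subspace of $\mathbb R^N$; since the $c_i$ are independent with no atoms, $c$ lies off this subspace almost surely, and therefore $\nabla_\alpha L\big|_{t=0}\neq 0$ almost surely.

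Combining the two steps, $\dot L^{\mathcal K}(0)<\dot L^{\mathrm{FF}}(0)$ almost surely. Because each $\phi_k\in C^1$, the loss evolves $C^1$-smoothly in $t$ along either gradient flow, so $h(t):=L^{\mathcal K}(t)-L^{\mathrm{FF}}(t)$ satisfies $h(0)=0$, $h\in C^1$, and $\dot h(0)<0$; hence $h(t)=\dot h(0)\,t+o(t)<0$ for all sufficiently small $t>0$, which produces the asserted $T$.

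The one genuinely non-routine step is (ii): transferring the content of Assumption~\ref{assumption:activation} (full rank of $\bm{\Phi}$, together with $K\ge m$) from arbitrary fixed nodes to the random, data-dependent pre-activations $z_{ij}$, and then propagating it through the random coefficients $c_i$ so as to conclude $\nabla_\alpha L|_{t=0}\neq 0$. Everything else — the chain-rule identity for $\dot L$, the agreement of the shared-parameter gradients at $t=0$, and the final continuity argument — is bookkeeping. I would also flag the degenerate case $r=0$ (equivalently $L(0)=0$), in which the strict inequality would fail; it is excluded almost surely under the stated randomness, and it is precisely why the "with probability one" qualifier is needed.
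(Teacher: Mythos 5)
Your overall route is the one the paper takes: make the two networks and their shared-parameter gradients coincide at $t=0$ under \eqref{init:1}, write $\dot L=-\|\nabla_\Theta L\|^2$ so that the difference of the two loss derivatives at $t=0$ is $-(\|\nabla_\alpha L\|^2+\|\nabla_\omega L\|^2)$, reduce everything to $\nabla_\alpha L|_{t=0}=\bm{A}(0)\,\mathrm{Res}(X)\neq 0$ almost surely, and finish with the $C^1$/local-existence continuity argument. Your facts (i) ($\mathrm{Res}(X)\neq 0$ a.s., which the paper leaves implicit) and (ii) (for each $i$ the pre-activations $z_{i1},\dots,z_{im}$ are a.s.\ distinct, so Assumption~\ref{assumption:activation} makes each $\bm{\Phi}_i$ injective) are correct and are exactly the ingredients the paper feeds into Lemma~\ref{lem:nonsingular-A}.

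The genuine gap is in your final aggregation step. You argue that $c\mapsto\sum_{i}c_i(\bm{\Phi}_i r)$ is a linear map whose coefficient vectors $\bm{\Phi}_i r$ are nonzero, hence its kernel is a proper subspace that the atomless, independent $c_i$ avoid almost surely. But $r$ is not a fixed vector: $r_j=\sum_i c_i\phi_1(w_i^Tx_j+b_i)-y_j$ depends on the very same $c$ whose randomness you are invoking (indeed your step (i) already used the randomness of $c$ to get $r\neq0$). The bad event is $\{c:\ \bigl(\sum_i c_i\bm{\Phi}_i\bigr)r(c)=0\}$, the zero set of a quadratic (not linear) polynomial map in $c$, so the ``nonzero coefficients $\Rightarrow$ proper linear subspace $\Rightarrow$ probability zero'' argument does not apply as stated; as written the step is circular. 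The paper sidesteps the dependence by proving a residual-independent statement: the whole $K\times m$ matrix $\bm{A}(0)=\sum_i c_i\bm{\Phi}_i$, with entries $\sum_i c_i\phi_k(v_i^T\tilde x_j)$, is of full column rank with probability one (Lemma~\ref{lem:nonsingular-A}), whence $\|\bm{A}(0)r\|\ge\sigma_{\min}(\bm{A}(0))\|r\|>0$ for \emph{any} nonzero residual, however $r$ is correlated with $c$. To repair your proof, either establish this matrix-level full-rank statement from your per-$i$ injectivity (essentially re-deriving the paper's lemma), or argue that the quadratic polynomial $c\mapsto\bm{A}(0)r(c)$ is not identically zero given the $v_i$'s and invoke absolute continuity of the law of $c$; without one of these, the key claim $\nabla_\alpha L|_{t=0}\neq0$ a.s.\ is not proved.
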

\begin{proof}
    The proof can be found in appendix \ref{app:thm:small-loss}.
\end{proof}

Theorem~\ref{thm:small-loss} shows that the Kronecker network induces a faster decay of the loss than that by the FF network at the beginning of the training.
We remark, however, that this does not imply that the training loss by the Kronecker
will always remain smaller than the loss by the FF.

Next, we show that 
two-layer Kronecker networks whose parameters follow the gradient flow dynamics \eqref{def:GradFlow} can achieve a zero training loss.
For the convergence analysis,
we assume that $\omega$ and $c$ are fixed and we train only
for $\{w_i,b_i\}_{i=1}^N\cup \{\alpha_k\}_{k=1}^K$.
From the training data set $\{(x_i,y_i)\}_{i=1}^m$,
without loss of generality, 
we assume that $\tilde{x}_i = [x_i; 1/\sqrt{2}]$ 
such that $\|\tilde{x}_i\| = 1$ for $1 \le i \le m$.
Let 
\begin{equation}
    X = \begin{bmatrix}
    \tilde{x}_1^T \\ \vdots \\ \tilde{x}_m^T
    \end{bmatrix} \in \mathbb{R}^{m\times (d+1)},
    \qquad
    \bm{y} = \begin{bmatrix}
    y_1 \\ \vdots \\ y_m
    \end{bmatrix} \in \mathbb{R}^{m}.
\end{equation}

\begin{theorem} \label{thm:convergence}
    Under Assumptions~\ref{assumption:init}
    and \ref{assumption:activation},
    suppose $c_i = \frac{\|\bm{y}\|}{Kn\sqrt{m}}\xi_i$ where $\xi_i$'s are 
    independently and identically distributed random variables 
    from the Bernoulli distribution with $p=0.5$,
    $\phi_k(x)$'s are bounded by $B$ in $\mathbb{R}$,
    $\alpha$ is initialized to satisfy
    \begin{equation*}
        K\|\alpha\|_{\infty} \le 1,
        \qquad
        \sum_{k=1}^K \alpha_k \mathbb{E}_{z\sim N(0,1)}[\phi_k(z)] = 0.
    \end{equation*}
    Suppose further that for $\delta \in (0,1)$, 
    $K$ satisfies 
    \begin{equation} \label{cond:K}
        \left(\lambda_0\sqrt{K} -2\|\bm{y}\|^2B\right)K
        \ge 2(1+\delta)\|\bm{y}\|^2B^2,
    \end{equation}
    where $\lambda_0$ is defined in Lemma~\ref{lem:nonsingular-A}.
    Then, with probability at least $1- e^{-\frac{m\delta^2}{2\|X\|^2}}$, we have
    \begin{align*}
        L^{\mathcal{K}}(t) \le L^{\mathcal{K}}(0)e^{-\frac{\lambda_0}{2} t}, \qquad \forall t \ge 0.
    \end{align*}
\end{theorem}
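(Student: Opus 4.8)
The plan is to run a neural-tangent-kernel style global-convergence argument in which the useful over-parameterization is supplied by the $K$ component activations together with the training of the mixing weights $\alpha$, rather than by the width. Write $r(t)=\big(u^{\mathcal K}_{\Theta(t)}(x_j)-y_j\big)_{j=1}^{m}\in\mathbb R^{m}$ for the residual, so that $L^{\mathcal K}(t)=\tfrac12\|r(t)\|^{2}$. Differentiating along the gradient flow \eqref{def:GradFlow} and using the chain rule,
\begin{equation*}
\dot L^{\mathcal K}(t)=-\,r(t)^{T}\big(G_{\alpha}(t)+G_{v}(t)\big)\,r(t),
\end{equation*}
where, the $m$ inputs being fixed, $G_{\alpha}(t)$ and $G_{v}(t)$ are the tangent-kernel Gram matrices associated with the parameter blocks $\alpha$ and $\{w_i,b_i\}$, both positive semidefinite. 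Discarding the $v$-block (it only helps), $\dot L^{\mathcal K}\le-\lambda_{\min}(G_{\alpha}(t))\,\|r\|^{2}=-2\lambda_{\min}(G_{\alpha}(t))\,L^{\mathcal K}$, so the theorem reduces to showing $\lambda_{\min}(G_{\alpha}(t))\ge\lambda_0/4$ for every $t\ge0$, after which Gr\"onwall's inequality gives $L^{\mathcal K}(t)\le L^{\mathcal K}(0)e^{-\lambda_0 t/2}$.

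The first task is to control $G_{\alpha}$ at initialization. Since $\omega=\bm{1}_{K\times1}$ is fixed (Assumption~\ref{assumption:init}), $\partial u^{\mathcal K}(x_j)/\partial\alpha_k=\sum_{i}c_i\,\phi_k(w_i^{T}x_j+b_i)$, so $G_{\alpha}(t)=\Psi(t)^{T}\Psi(t)$ with $\Psi(t)=\sum_{i}c_i\,\bm{\Phi}^{(i)}(t)$ and $\bm{\Phi}^{(i)}(t)_{kj}=\phi_k\big(w_i(t)^{T}x_j+b_i(t)\big)$ a $K\times m$ matrix. Because the $x_j$ are distinct and each $v_i(0)=[w_i;b_i]$ is drawn from an absolutely continuous law, with probability $1$ the $m$ pre-activations of each neuron are distinct, so Assumption~\ref{assumption:activation} forces every $\bm{\Phi}^{(i)}(0)$ to have rank $m$, i.e.\ $(\bm{\Phi}^{(i)}(0))^{T}\bm{\Phi}^{(i)}(0)\succ0$. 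Combining $\lambda_{\min}\big(\sum_i M_i\big)\ge\sum_i\lambda_{\min}(M_i)$ for symmetric positive semidefinite $M_i$, the sign-independence built into $c_i=\tfrac{\|\bm y\|}{Kn\sqrt m}\,\xi_i$, a concentration inequality over the $\xi_i$ (the step that produces the event of probability $\ge1-e^{-m\delta^{2}/2\|X\|^{2}}$), and Lemma~\ref{lem:nonsingular-A}, one obtains a lower bound on $\lambda_{\min}(G_{\alpha}(0))$ of order $\lambda_0\sqrt K$. The same normalization together with the balance condition $\sum_k\alpha_k\,\mathbb E_{z\sim N(0,1)}[\phi_k(z)]=0$ and the cap $K\|\alpha\|_\infty\le1$ also makes the initial network output negligible, so $L^{\mathcal K}(0)=\tfrac12\|\bm y\|^{2}(1+o(1))$; I would record this for the bookkeeping below.

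Next I would propagate positive definiteness of $G_{\alpha}$ along the trajectory by a continuity/bootstrap argument. On any interval $[0,t]$ where $L^{\mathcal K}(s)\le L^{\mathcal K}(0)$, the flow $\dot v_i=-\partial_{v_i}L^{\mathcal K}$, the regularity $\phi_k\in C^{1}$ with $|\phi_k|\le B$, and the smallness $|c_i|\sim\|\bm y\|/(Kn\sqrt m)$ give a uniform bound on $\|v_i(s)-v_i(0)\|$, hence, by the boundedness and local Lipschitz behaviour of the $\phi_k$, a perturbation estimate of the shape $\|G_{\alpha}(t)-G_{\alpha}(0)\|\le 2\|\bm y\|^{2}B/(\text{a quantity increasing in }K)$. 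Condition~\eqref{cond:K}, which reads ``the initialization margin $\lambda_0\sqrt K$ dominates the drift $2\|\bm y\|^{2}B$ with slack $2(1+\delta)\|\bm y\|^{2}B^{2}/K$'', is precisely the budget guaranteeing that $\lambda_{\min}(G_{\alpha}(t))$ stays above $\lambda_0/4$ as long as $L^{\mathcal K}$ has not exceeded $L^{\mathcal K}(0)$ on $[0,t]$. But then $\dot L^{\mathcal K}\le-\tfrac{\lambda_0}{2}L^{\mathcal K}\le0$ there, so $L^{\mathcal K}$ is in fact nonincreasing; the standard continuity argument (the set of times on which the bound has held so far is open, closed and nonempty in $[0,\infty)$) upgrades this to all $t\ge0$, and Gr\"onwall's inequality closes the proof.

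I expect the bulk of the effort to sit in the third step: producing a perturbation bound on $G_{\alpha}(t)-G_{\alpha}(0)$ that is simultaneously small enough to be absorbed under \eqref{cond:K} and compatible with the high-probability initialization bound; this forces one to track carefully how the scales $|c_i|\sim 1/(Kn)$, the cap $K\|\alpha\|_\infty\le1$ and the amplitude $B$ combine as $K$ grows, which is exactly what the somewhat delicate inequality \eqref{cond:K} encodes, and it is also where the precise relation between $\lambda_0$ and the spectrum of $G_{\alpha}(0)$ must be pinned down. A secondary difficulty is the probabilistic bookkeeping behind the event of probability $1-e^{-m\delta^{2}/2\|X\|^{2}}$, which must deliver the eigenvalue bound on $G_{\alpha}(0)$ (via concentration of the random combination $\sum_i\xi_i\,\bm{\Phi}^{(i)}(0)$) and control the initial residual at once.
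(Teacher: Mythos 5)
Your skeleton coincides with the paper's: the paper also drops all blocks of the tangent map except the $\alpha$-block (from $\tfrac{d}{dt}L=-\|\bm M\cdot\mathrm{Res}(X)\|^2\le-\|\bm A\cdot\mathrm{Res}(X)\|^2$), bounds the parameter displacement, transfers $\sigma_{\min}(\bm A(t))\ge\lambda_0/2$ along the trajectory via a perturbation estimate (Lemmas~\ref{lem:perturbation} and~\ref{lemma:nonsingular-A-t}), and finishes with Gr\"onwall. The gaps are in the two quantitative steps that make this work. First, the initialization bound: in the paper $\lambda_0$ is \emph{by definition} (Lemma~\ref{lem:nonsingular-A}) the smallest singular value of $\bm\Psi=\bm A(0)$ itself; its positivity is an almost-sure full-rank statement, with no concentration over the $\xi_i$ and nothing of size $\lambda_0\sqrt K$ to be established. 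Your route to a bound of order $\lambda_0\sqrt K$ via $\lambda_{\min}(\sum_i M_i)\ge\sum_i\lambda_{\min}(M_i)$ is not available: $G_\alpha(0)=\Psi(0)^T\Psi(0)$ with $\Psi(0)=\sum_i c_i\bm\Phi^{(i)}(0)$ is not a sum of PSD matrices indexed by neurons; expanding the square produces signed cross terms $c_ic_{i'}(\bm\Phi^{(i)})^T\bm\Phi^{(i')}$, which under sign-symmetric $c_i$ can cancel, so no $\sqrt K$ amplification of $\lambda_0$ comes out of that inequality. Relatedly, you misattribute the failure probability: the event of probability $1-e^{-m\delta^2/(2\|X\|^2)}$ has nothing to do with the kernel spectrum; it comes from Gaussian Lipschitz concentration over $V=\{v_i\}$ in Lemma~\ref{lemma:misfit-init}, whose only role is to bound the initial loss, $\sqrt{2L(0)}\le\|\bm y\|\left(1+(1+\delta)B/K\right)$, using $K\|\alpha\|_\infty\le1$ and the centering condition $\sum_k\alpha_k\mathbb{E}[\phi_k(z)]=0$. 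The $\sqrt K$ in \eqref{cond:K} appears only when this bound on $L(0)$, hence on the displacement $\|\Theta(t)-\Theta(0)\|\le\sqrt{L(0)}$, is compared with the admissible perturbation radius of Lemma~\ref{lemma:nonsingular-A-t}, whose denominator carries $\|c\|_1 B\sqrt{Km}$ with $\|c\|_1\sim\|\bm y\|/(K\sqrt m)$; it is a budget comparison, not a spectral gain at initialization.

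Second, your trajectory control is asserted rather than derived: on an interval where $L(s)\le L(0)$, boundedness of the loss alone does not bound $\|v_i(s)-v_i(0)\|$ — the parameters could wander within a level set. The paper obtains the displacement bound from the energy identity $\|\dot\Theta(t)\|^2=-\tfrac{d}{dt}L(t)$, which yields $\|\Theta(t)-\Theta(0)\|\le\sqrt{L(0)}$ for all $t$ and thereby bypasses your open-closed bootstrap; alternatively one can bootstrap exponential decay of the residual and integrate $\|\dot\Theta\|\le\|\bm M\|\,\|\mathrm{Res}\|$, but then the displacement bound must be computed from the decaying residual, not from $L\le L(0)$. As written, the step that feeds the perturbation estimate of Lemma~\ref{lem:perturbation} — and with it the verification that \eqref{cond:K} suffices to keep $\sigma_{\min}(\bm A(t))\ge\lambda_0/2$ — is missing, precisely at the places you yourself flag as the bulk of the effort.
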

\begin{proof}
    The proof can be found in appendix \ref{app:thm:convergence}.
\end{proof}

In particular, if $B = 1$, it can be checked that 
a sufficient condition for \eqref{cond:K} is 
$K \ge (1+\sqrt{1+4\lambda_0})\frac{\|\bm{y}\|^2}{\lambda_0}$.
Since $\|\bm{y}\|^2 = m\cdot  \frac{1}{m}\sum_{i=1}^m y_i^2$,
we have $K = \mathcal{O}(m)$.
Its corresponding number of parameters of the Kronecker network
is 
$2K + N(d+2) = \mathcal{O}(m) + N(d+2)$.

It is worth mentioning that 
several works \cite{allen2018convergence,du2018gradient-shallow,du2018gradient-DNN,oymak2019towards,zou2018stochastic,Jacot_18NTK}
analyzed over-parameterized neural networks (that is, the number of network parameters is significantly larger than the number of training data) and showed that
gradient descent can train neural networks to interpolate all the training data.
Unlike such existing results on the global convergence of gradient descent for significantly over-parameterized two-layer FF networks,
the two-layer Kronecker NN does not require such severe over-parameterization.
The required number of parameters is merely 
is $(d+2)N + 2K$,
with $K =\mathcal{O}(m)$.

\section{Computational Examples}
\begin{figure} [htpb] 
\centering
\includegraphics[trim=0.4cm 0.8cm 0.4cm 0.8cm, scale=0.55, angle=0]{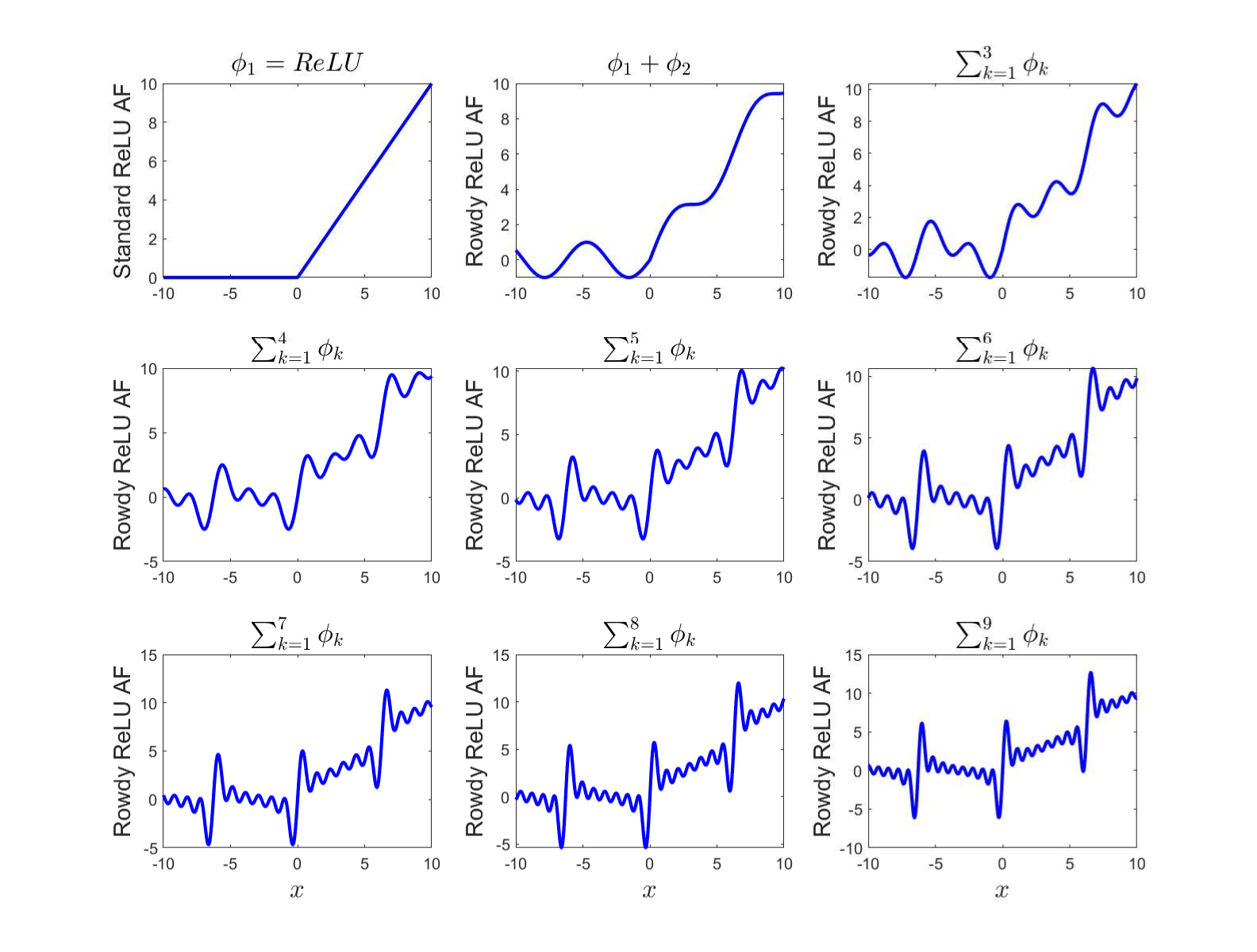}
\caption{Rowdy activation functions: The standard $\phi_1 =$ ReLU activation function, and the remaining $\phi_k = n\sin((k-1)nx), k\geq 2$ activation functions with $n=1$.}\label{fig:RowdyAF}
\end{figure}

The KNN is a general framework for the adaptive activation functions where any combination of activation functions can be chosen. Thus, there is no unique way to choose these activation functions. In this regard, we propose the \textit{Rowdy activation functions} and we refer to the corresponding KNN as \textit{Rowdy-Net} (a neural network with Rowdy activation functions). In the Rowdy network, we choose $\{\phi_1\}$ to be any standard activation function such as ReLU, tanh, ELU, sine, Swish, Softplus etc., and the remaining $\{\phi_k\}_{k=2}^K$ activation functions are chosen as
\begin{equation} \label{def:Rowdy}
     \phi_k(x) = n\sin((k-1)nx) \quad  \text{or} \quad n\cos((k-1)nx), \quad \forall 2 \le k\le K,
\end{equation}
where $n \geq 1$ is the fixed positive number acting as scaling factor.
The word `\textit{Rowdy}' means highly fluctuating/irregular/noisy, which is used to signify the $\{\phi_k\}_{k=2}^K$ activation functions.   Figure \ref{fig:RowdyAF} shows the Rowdy ReLU activation functions with $n=1$ for different $K$ terms. The purpose of choosing such fluctuating terms is to inject bounded but highly non-monotonic, noisy effects to remove the saturation regions from the output of each layer in the network. On similar grounds, Gulcehre et al. \cite{gulcehre2016noisy} proposed noisy activation functions by adding random noise. Similarly, Lee et al. \cite{lee2019probact} proposed probabilistic activation functions for deep neural networks.

The scaling factor $n$ plays an important role in terms of convergence of the network training process. There is no rule of thumb for choosing the value of scaling factor, which basically depends on the specific problem. Our numerical experiments show that for regression problems like function approximation and inferring the solution of PDEs, values of $ n \geq 1$ can accelerate the convergence, but larger values of $n$ can make the optimization algorithm sensitive. The scaling factor defined with the trainable parameters are initialized in such a way that the initial activation of Rowdy-Net is the same as the corresponding standard activation function. 
For more details on the scaling factor, see Jagtap et al. \cite{jagtap2019adaptive, jagtap2020locally}.
In this section, we shall demonstrate the efficiency of the Rowdy-Net by comparing its performance with the fixed (standard), and layer-wise locally adaptive (L-LAAF) for various regression and classification test cases. For convenience, we clearly mention the fixed (f) and the trainable (t) parameters with their respective initialization for all three types of activation functions, namely,
\begin{align*}
\textbf{Fixed AF} & :  \alpha_1^l = 1 ~(\text{f});~~~ \alpha_k^l = 0, \forall k \geq 2 ~(\text{f});~~~ \omega_1^l = 1  ~(\text{f});~~~ \omega_k^l = 0, \forall k \geq 2 ~(\text{f}),
\\ \textbf{L-LAAF} & : \alpha_1^l = 1 ~(\text{f});~~~ \alpha_k^l = 0, \forall k \geq 2 ~(\text{f});~~~ \omega_1^l = 1  ~(\text{t});~~~ \omega_k^l = 0, \forall k \geq 2 ~(\text{f}),
\\ \textbf{Rowdy AF} & : \alpha_1^l = 1 ~(\text{f});~~~ \alpha_k^l = 0, \forall k \geq 2 ~(\text{t});~~~ \omega_1^l = 1  ~(\text{t});~~~ \omega_k^l = 1, \forall k \geq 2  ~(\text{t}),
\end{align*}
for all $l$. Note that with this initialization, the initial activation functions for L-LAAF as well as Rowdy-Net in each hidden-layer are the same as the fixed activation function. Moreover, the multiplication of scaling factor $n$ in \eqref{def:Rowdy}
with any trainable parameter, say, $\omega_1^l$, the initialization is done such that $n\omega_1^l = 1, \forall n$.

It is important to note that the KNNs used in our experiments are the  modifications of the widely used neural network models such as FNNs and CNNs. Accordingly, the plots for the fixed activation function (fixed AF) are the results of these widely used neural network models. For example, Figures \ref{fig:new:1} and \ref{fig:new:2} compare the FNN vs the KNN, as the base architecture is FNN and thus the fixed AF results are those of the FNN. Similarly, Figure \ref{fig:new:3} compares the LeNet (a widely used CNN) and the KNN, and Figure \ref{fig:new:4} compares the ResNet with convolutions (another widely used CNN) and the KNN.

\subsection{Nonlinear discontinuous function approximation}
In this test case we will show the ability of the Rowdy-Net to achieve the machine zero loss value. We also compare the performance of fixed, L-LAAF and Rowdy-Net with different number of terms. We consider the discontinuous nonlinear function given by
$$\mathit{f}(x) = \begin{cases} 0.2~\sin(6x) & \text{For}~x<0 \\ 1+0.1x~\cos(14x) & \text{Otherwise}.\end{cases}
$$
The loss function consists of only the data mismatched term.

\begin{figure} [htpb] 
\centering
\includegraphics[trim=0cm 0.0cm 1cm 0cm, scale=0.47, angle=0]{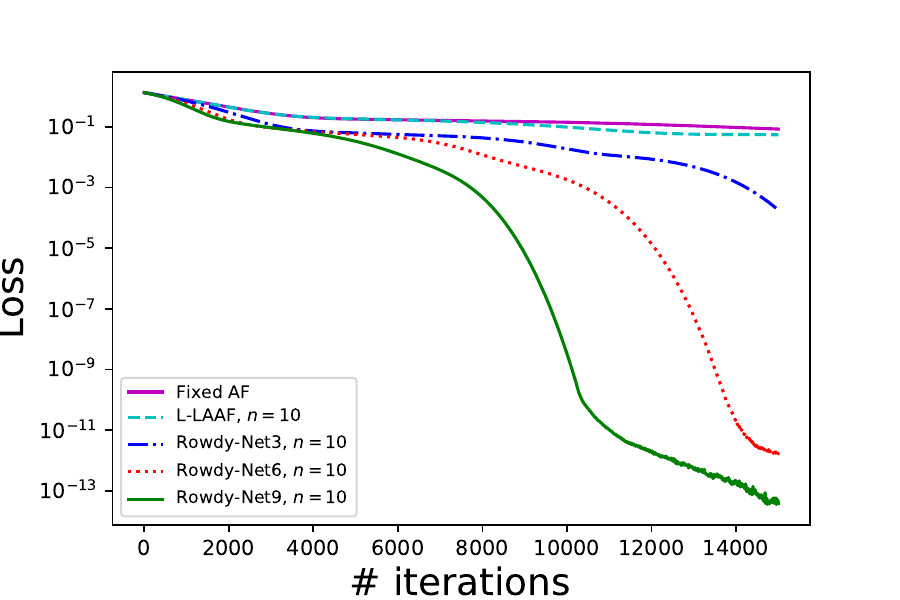}
\includegraphics[trim=0cm 0.0cm 1cm 0cm, scale=0.47, angle=0]{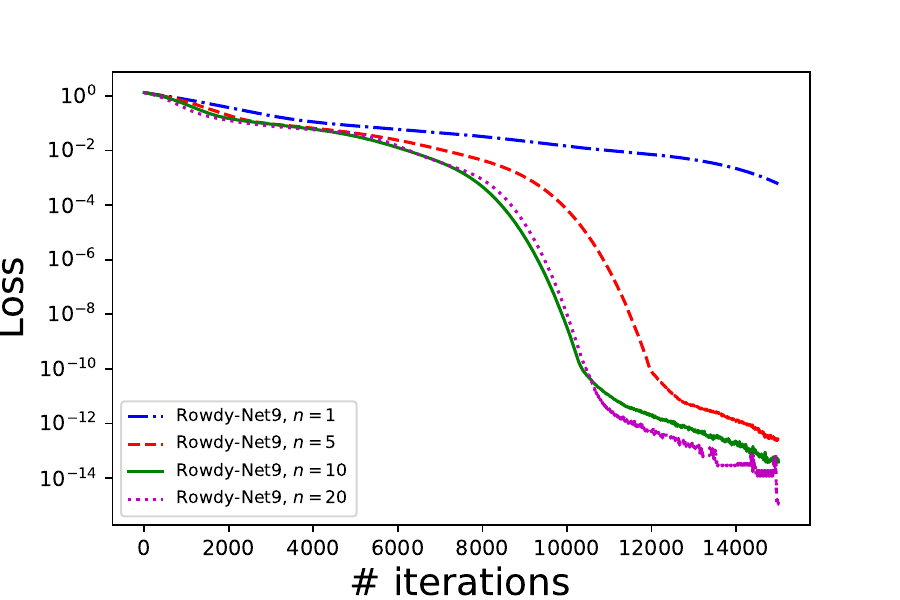}
\caption{Nonlinear discontinuous function approximation: The left figure shows the loss function variation using fixed, L-LAAF and Rowdy activation functions (3, 6 and 9 terms) while the right figure shows the loss function for a 9-term Rowdy network with different scaling factors $n$.}\label{fig:Test0}
\end{figure}
The domain is [-3, 3] and the number of training points is just 5, which are fixed for all cases. We used a single hidden-layer with 40 neurons, cosine activation function, and the learning rate is 8.0e-6. 
Figure \ref{fig:Test0} shows the loss function, and it can be observed that with just  a 2-layer shallow neural network the loss function for Rowdy-Net approaches machine zero precision very quickly. The right figure gives the performance comparison for the fixed, L-LAAF and the Rowdy-Net$K$ (where the number $K$ signifies the first $K$ number of terms used in the computations) for the scaling factor $n = 10$. The Rowdy network performs consistently well by increasing the number of terms. The right figure shows the effect of scaling factor on the performance of the network. It can be seen that  by increasing the scaling factor the network can be trained faster. As discussed before, the initialization  of all adaptive activation functions is done such that they are the same as the fixed activation function at initial step, hence, all loss functions values start from the same point in all the cases. Table \ref{TableTC1} shows the comparison of the total normalized computational cost required for the fixed activation function, L-LAAF and the Rowdy-Net  with 3,6 and 9 terms. 
\begin{figure} [htpb] 
\centering
\includegraphics[trim=0cm 0.0cm 1cm 0cm, scale=0.57, angle=0]{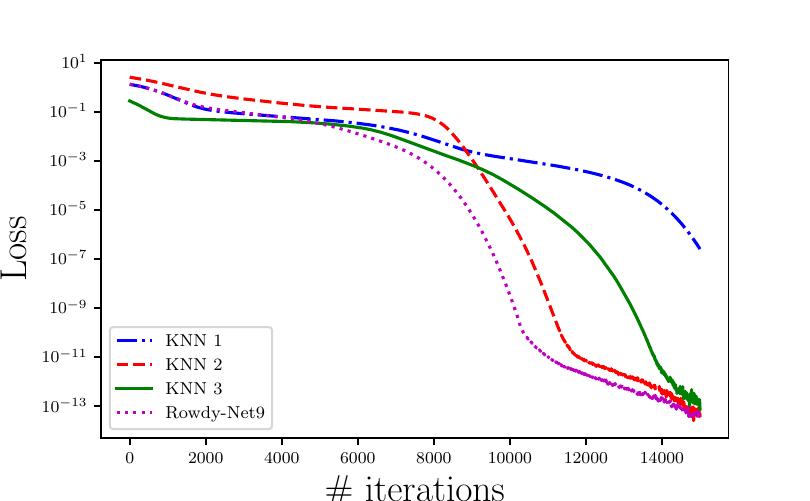}
\caption{Nonlinear discontinuous function approximation: Comparison of different KNN activation functions with Rowdy-Net 9 using scaling factor 10 for all cases.}\label{fig:Test13}
\end{figure}
\begin{table}[htpb]
\begin{center}
\small \begin{tabular}{cccccc} \hline
& Fixed & L-LAAF &  Rowdy-Net3 &Rowdy-Net6 & Rowdy-Net9 \\  \hline 
Normalized time & 1 & 1.12& 1.26 & 1.58 & 1.75 \\
 \hline
 \end{tabular}
\caption{Nonlinear discontinuous function approximation: Comparison of total normalized computation time for fixed activation function, L-LAAF and Rowdy-Net (with 3, 6 and 9 terms) activation functions. The time required for fixed activation function is taken as a baseline.}\label{TableTC1}
\end{center}
\end{table}
In this case,  the time required for fixed activation function is taken as a baseline. By increasing the number of terms in the Rowdy-Net, the computational time increases.
Keeping $\phi_1$ as cosine activation function, we also compare the Rowdy-Net9 with different choices for $\phi_k, k\geq 2$ as defined below:
\begin{align}
\text{KNN}_1 & = \sum_{k=2}^9 \alpha_k^l\tanh(\omega_k^l\mathcal{L}_l(z)), l = 1,\ldots,D-1. \\
\text{KNN}_2 & = \sum_{k=2}^9 \alpha_k^l\text{ReLU}(\omega_k^l\mathcal{L}_l(z)), l = 1,\ldots,D-1.
\end{align}
and in the case of KNN$_3$, the $\phi_k$'s are chosen randomly as $\phi_2 = \tanh, \phi_3 = \text{Sigmoid},\phi_4 = \text{elu},\phi_5 = \text{ReLU}$ $\phi_6 = \tanh,\phi_7 =\tanh,\phi_8 = \text{Softmax},\phi_9 = \text{Swish}$. Figure \ref{fig:Test13} shows the comparison of the loss functions for these activation functions; clearly, Rowdy-Net9 is trained faster.
 
\subsection{High frequency function approximation}
In this test case we consider the high frequency sinusoidal functions given by $\sin(m \pi x),~ m=1,100$ and 200. The domain is [0, 2$\pi$] and the number of training points is 100, which is fixed for all the cases. We use cosine activation function and the learning rate is 4.0e-6. The number of hidden-layers is 3 with 50 neurons in each layer. We use the first nine terms of Rowdy activation functions. The scaling factor $n = 10$ is used in all the cases.

\begin{figure} [htpb] 
\centering
\includegraphics[trim=1.2cm 0cm 0cm 0cm, scale=0.36, angle=0]{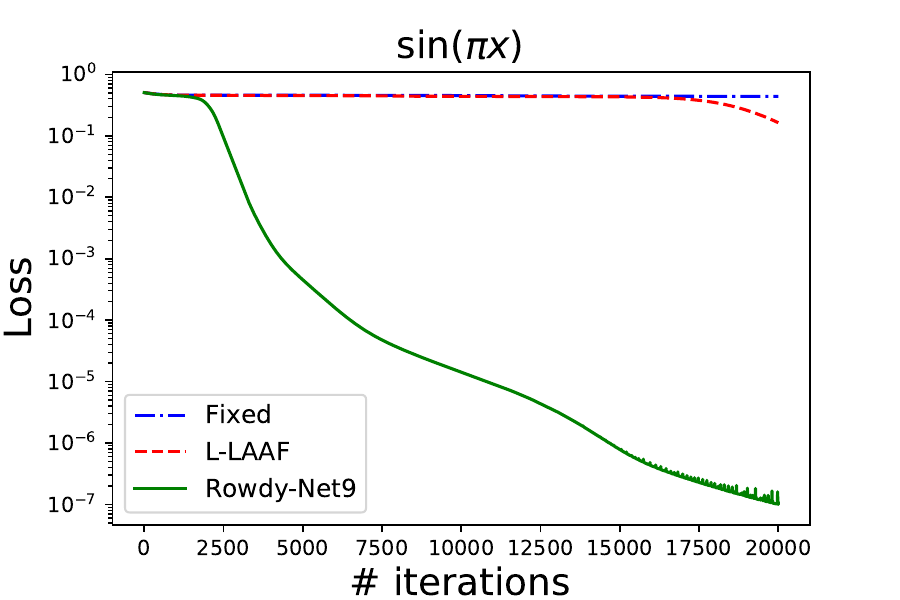}
\includegraphics[trim=1.2cm 0cm 0cm 0cm, scale=0.36, angle=0]{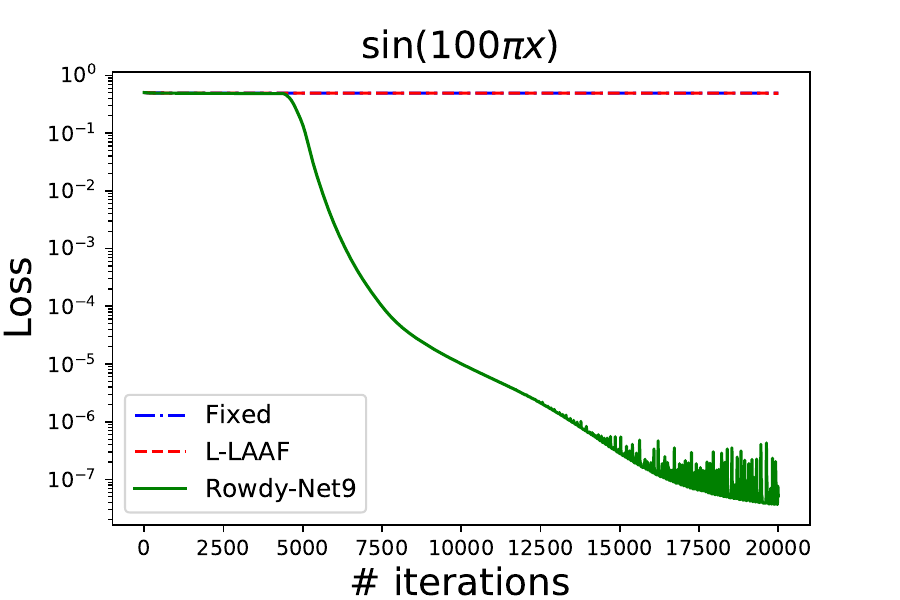}
\includegraphics[trim=1.2cm 0cm 0cm 0cm, scale=0.36, angle=0]{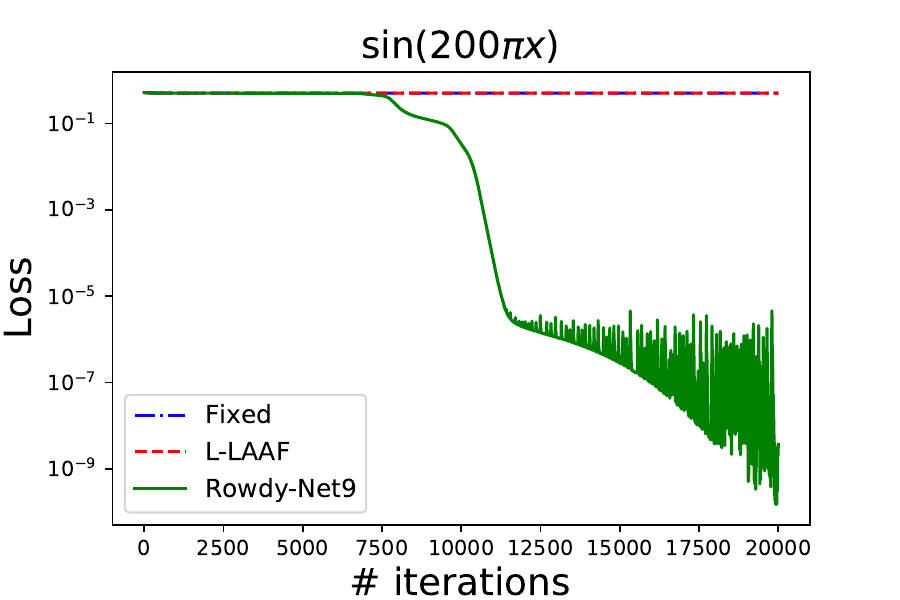}
\caption{High frequency function approximation: Loss function versus number of iterations for $\sin(m\pi x), `m = 1, 100$ and 200, using fixed, L-LAAF and Rowdy-Net9 activation functions.}\label{fig:Test1}
\end{figure}
\begin{table}[htpb]
\begin{center}
\small \begin{tabular}{cccc} \hline
& Fixed & L-LAAF &  Rowdy-Net9 \\  \hline 
Normalized time & 1 & 1.11 & 1.73 \\
 \hline
 \end{tabular}
\caption{High frequency function approximation: Comparison of total normalized computation time for fixed activation function, L-LAAF and Rowdy-Net9 activation function. Time required for fixed activation function is taken as a baseline.}\label{TableTC2}
\end{center}
\end{table}
Figure \ref{fig:Test1} shows the loss functions for fixed, locally adaptive (L-LAAF) and Rowdy-Net9 for $\sin(m\pi x), m = 1, 100$ and 200. In all cases, despite using a small learning rate the Rowdy-Net converges faster than the fixed and locally adaptive activation functions.
Table \ref{TableTC2} shows the comparison of total normalized computational cost required for the fixed, L-LAAF and the Rowdy-Net9 activation functions. Again, the time required for fixed activation function is taken as a baseline. We can see a similar increment in the computation cost requirement for both L-LAAF and Rowdy-Net activation functions compared to fixed activation function.

\subsubsection{Effect of high learning rate}
We again perform the same experiment with higher learning rates (LR) 4.0e-3. Figure \ref{fig:Test2new} shows the $\sin(\pi x)$ function approximation example. Fixed, L-LAAF and Rowdy activation functions converge faster. In the case of both fixed as well as L-LAAF, the loss function goes till 1.0e-6, see figure \ref{fig:Test2new} (left). Furthermore, in the case of Rowdy-Net, it can be seen that the loss function decreases till 1.0e-11, but then suddenly goes up. The main reason is the high learning rate, which can make the parameters in the Rowdy activation function, and in turn, the Rowdy activation function very sensitive during the optimization procedure.
\begin{figure} [htpb] 
\centering
\includegraphics[trim=1.2cm 0cm 0cm 0cm, scale=0.5, angle=0]{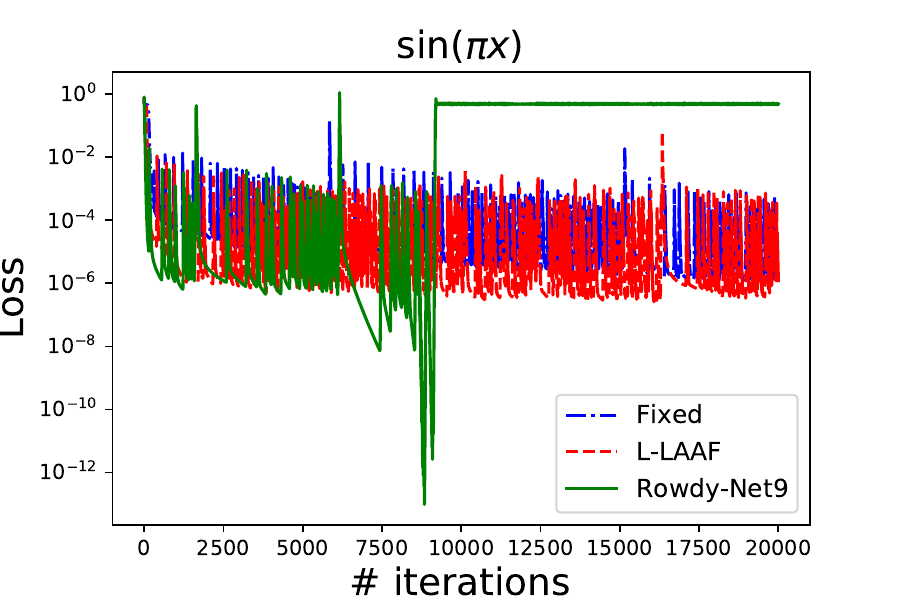}
\includegraphics[trim=1.2cm 0cm 0cm 0cm, scale=0.5, angle=0]{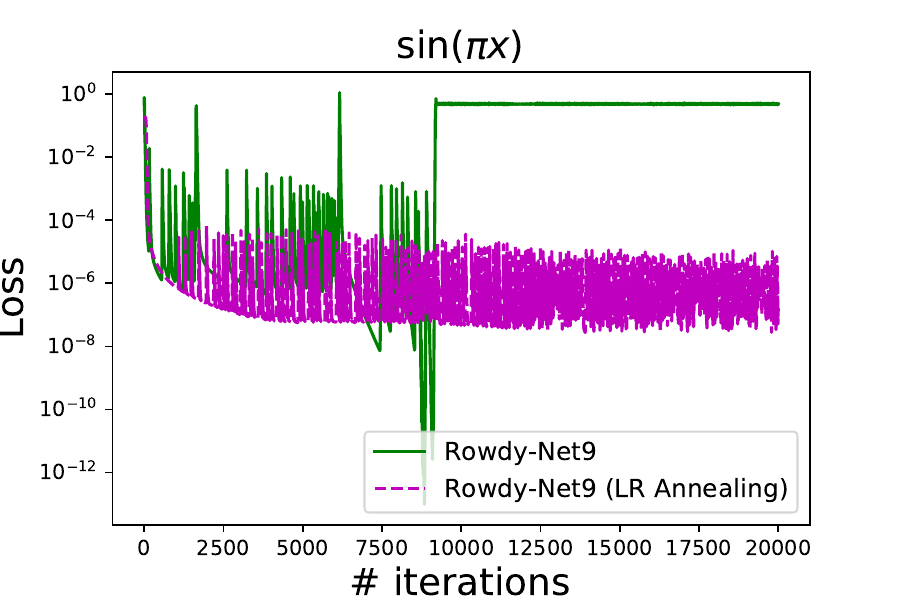}
\caption{Effect of high learning rate on the convergence of fixed, L-LAAF, and Rowdy-Net (left). The Rowdy-Net loss function with and without learning rate annealing (right).}\label{fig:Test2new}
\end{figure}
Such behavior can be avoided by either using the low learning rate or by using the strategy of learning rate annealing \cite{krizhevsky2012imagenet}. The learning rate annealing can significantly affect generalization performance of the neural networks. In particular, training of neural network
with a large initial learning rate followed by a smaller annealed learning rate can outperform the neural network training with the smaller learning rate used throughout.
Figure \ref{fig:Test2new} (right) shows the Rowdy-Net9 results with and without learning rate annealing. In case with learning rate annealing, we decreased the learning rate from 4e-3 to 1e-4 after 500 iterations. The learning rate annealing not only curbs the sensitiveness of Rowdy activation functions, but also decreases the magnitude of oscillations in the loss function.

\subsection{Helmholtz equation}
The Helmholtz equation arises in many real-world problems such as acoustics, vibrating membrane etc. Here we employed Physics-Informed Neural Networks (PINNs) \cite{raissi2019physics} to solve the Helmholtz equation. The PINN is a simple and efficient method for solving partial differential equations involving sparse and noisy data set. The PINN framework can  incorporate the given information like governing equation, experimental as well as synthetic (high resolution numerical solution) data into the loss function, thereby converts the original problem into an optimization problem. The PINN method has been successfully applied to solve many problems in science and engineering, see for examples \cite{jagtap2020extended, kharazmi2021hp, shukla2020physics, mao2020physics, shukla2021parallel, shukla2021physics, cai2021flow, jagtap2020conservative}.

The Helmholtz equation in two dimensions is given by
\begin{equation}\label{2DHel}
 u_{xx} + u_{yy} + k^2 u = g(x,y), \ \ (x,y) \in [-1,~1]^2,
\end{equation}
with appropriate Dirichlet boundary conditions. The forcing term is obtained from the exact solution $u(x,y) = \sin(\pi x)\sin(4\pi y)$ for $k=1$, which is given as
\begin{equation*}
 g(x,y) =  -\pi^2 \sin(\pi x)\sin(4\pi y) - (4\pi)^2 \sin(\pi x)\sin(4\pi y)+ k^2 \sin(\pi x)\sin(4\pi y).
\end{equation*}

\begin{figure} [htpb] 
\centering
\includegraphics[trim=2.9cm 0.0cm 0cm 0cm, scale=0.295, angle = 0]{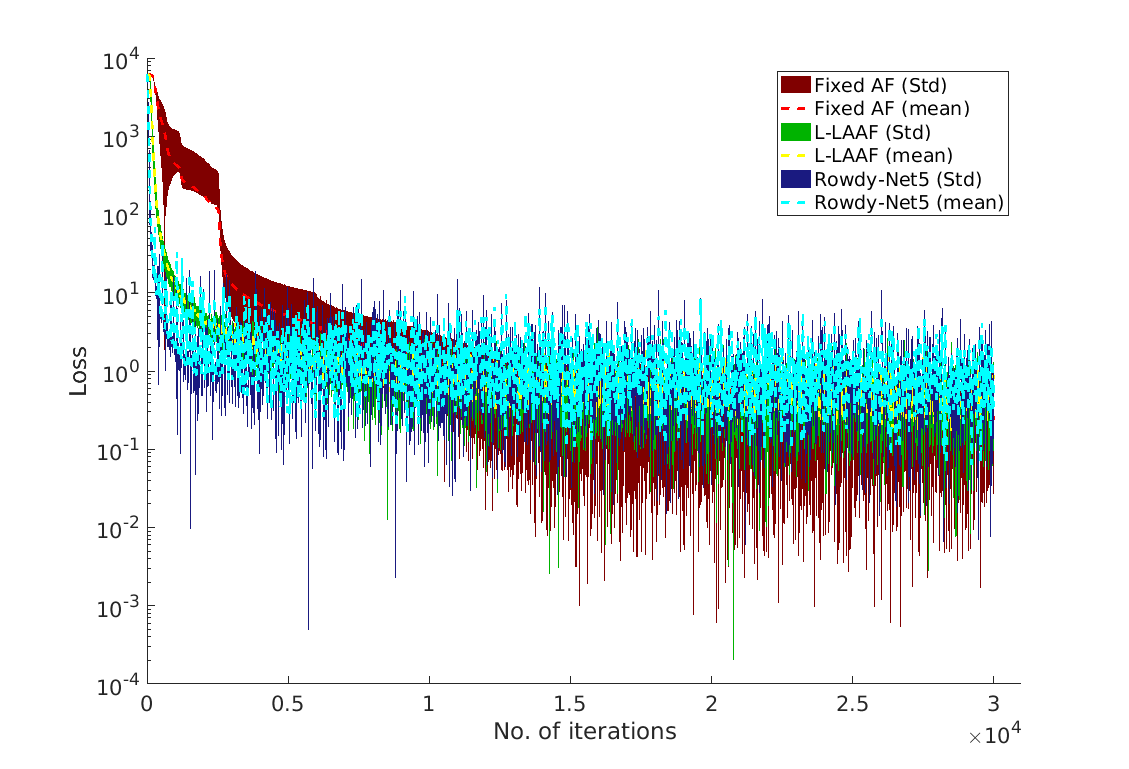}
\includegraphics[trim=2.9cm 0.0cm 1.5cm 0cm, scale=0.295, angle = 0]{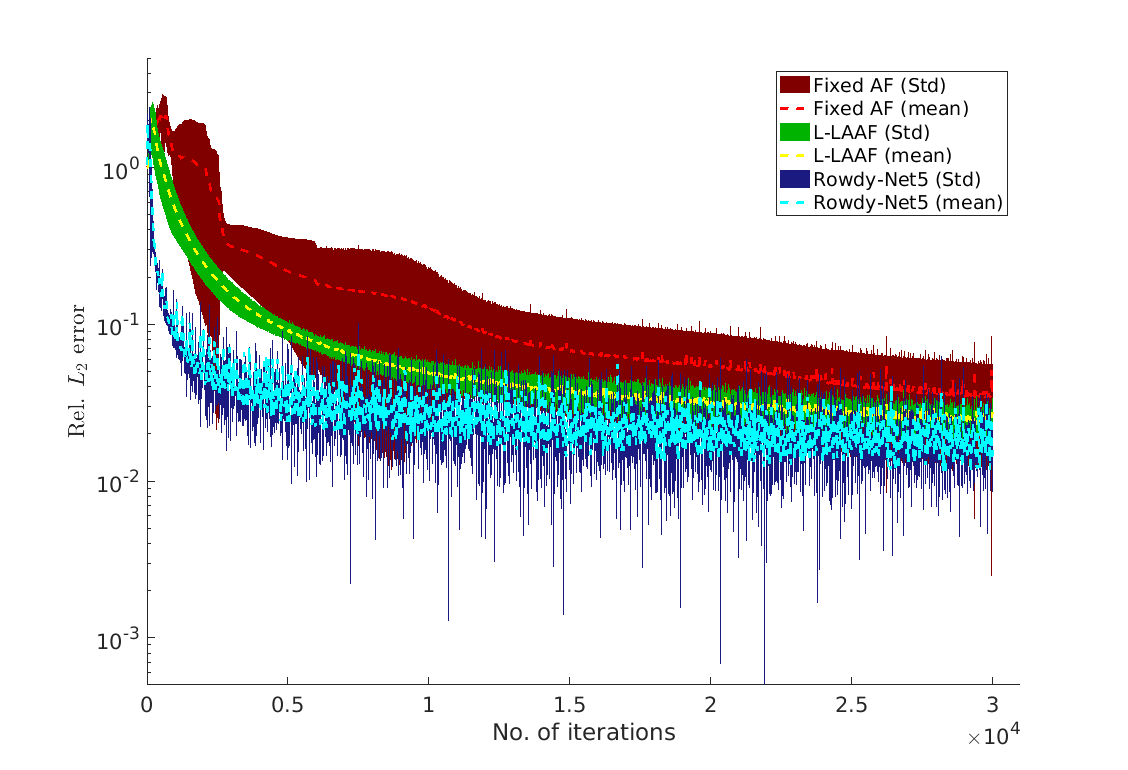}
\caption{Helmholtz equation: Mean and std. deviation of loss function (left) and relative $L_2$ error (right) for up to 30k iterations for fixed AF, L-LAAF and Rowdy-Net5 (5 terms) using 5 different realizations in each case. }\label{fig:rowHe}
\end{figure}

\begin{figure} [htpb] 
\centering
\includegraphics[ scale=0.4, angle = 0]{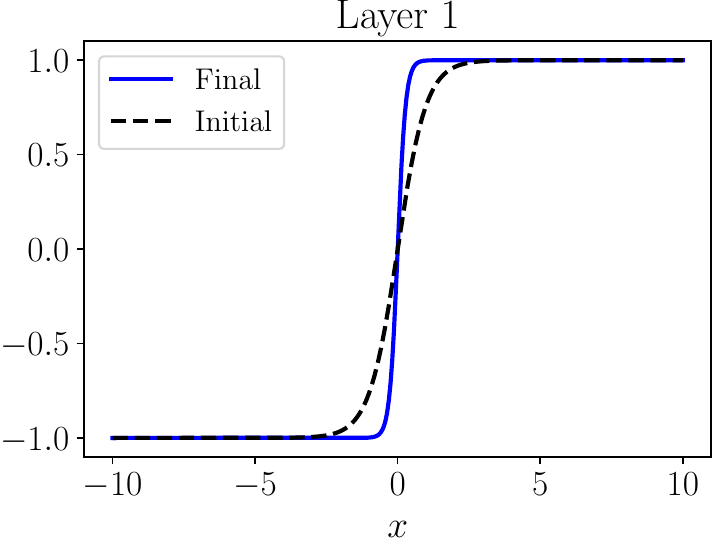}
\includegraphics[ scale=0.4, angle = 0]{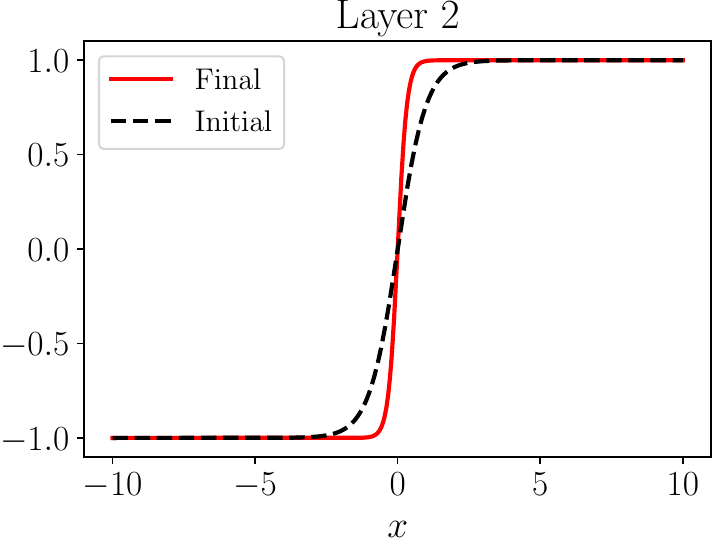}
\includegraphics[ scale=0.4, angle = 0]{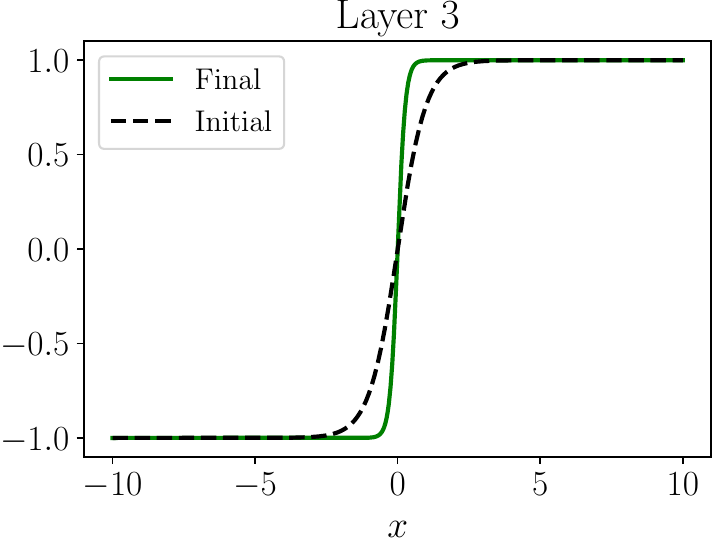}

\includegraphics[ scale=0.41, angle = 0]{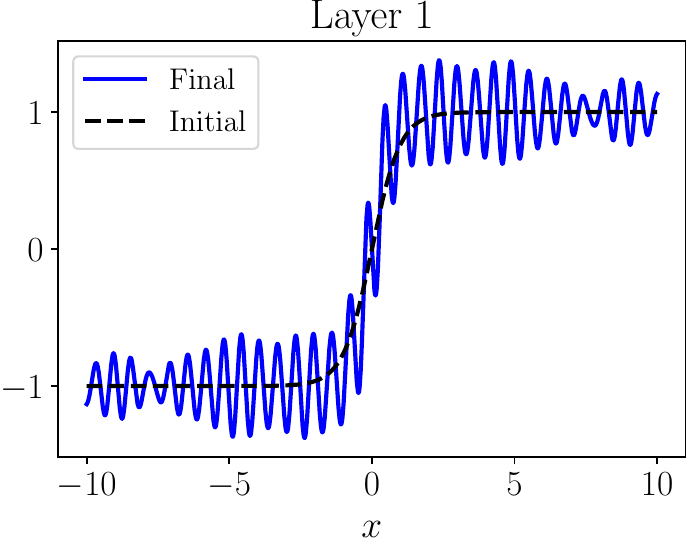}
\includegraphics[ scale=0.41, angle = 0]{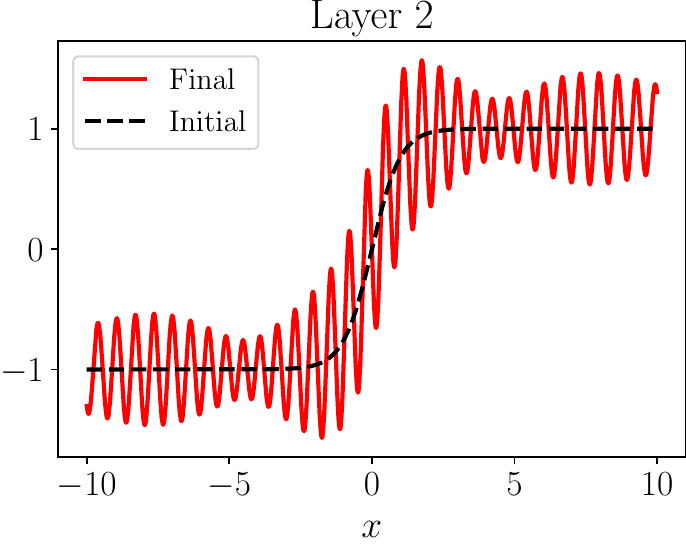}
\includegraphics[ scale=0.41, angle = 0]{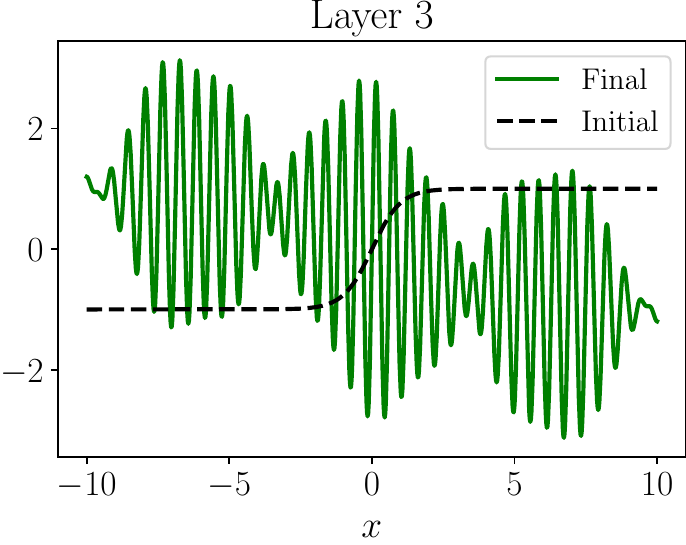}

\caption{Helmholtz equation: Layer-wise L-LAAF (Top row) and Rowdy (bottom row) hyperbolic tangent activation functions (Rowdy-Net5). In the L-LAAF, only the slope of the activation function changes without changing the saturated region but the Rowdy activation function can get rid of the saturated region, hence, it can be trained faster. }\label{fig:rowSig}
\end{figure}
We used a 3 hidden-layers, 30 neurons per layer fully connected neural network with hyperbolic tangent activation function. The number of boundary training points is 300, and the number of residual points is 6000, which are randomly chosen. The learning rate is 8.0e-3 and the optimizer is ADAM.

\begin{figure} [htpb] 
\centering
\includegraphics[trim=1.2cm 0cm 0cm 0cm, scale=0.6, angle=0]{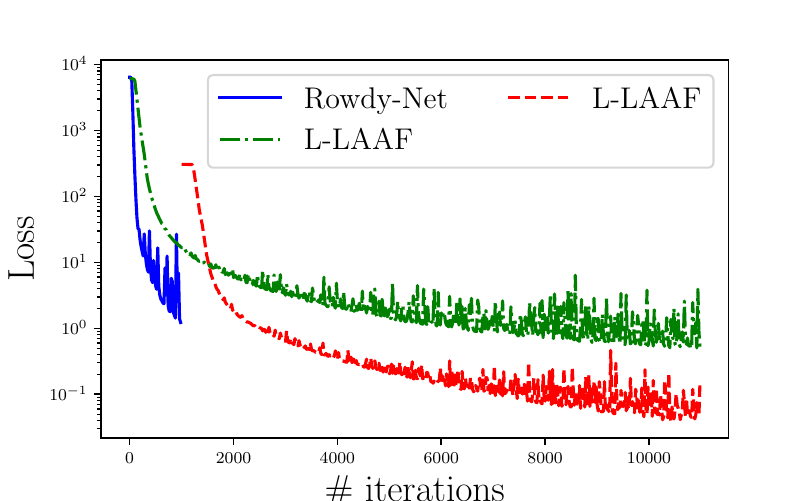}
\includegraphics[trim=1.2cm 0cm 0cm 0cm, scale=0.6, angle=0]{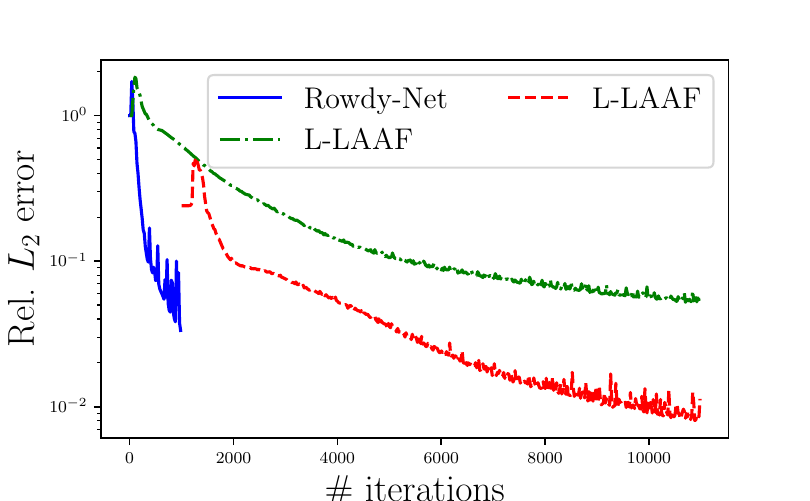}
\caption{Helmholtz equation: Loss function (left) and relative $L_2$ error (right), where the Rowdy network is trained for the first 1000 iterations (blue line) and then switched to L-LAAF network (red dashed line) for the remaining iterations for computational expediency. These results are also compared with L-LAAF as shown by green dash-dot line.}\label{fig:Test3_h}
\end{figure}
For the Helmholtz equation we are using a sine fluctuating part with first 5 (Rowdy-Net5) terms, and we also use the scaling factor $n = 10$ in all cases. Figure \ref{fig:rowHe} shows the loss function (left) and relative $L_2$ error (right) up to 30k iterations for fixed AF, L-LAAF and Rowdy-Net5 using 5 different realizations in each case. It can be seen that the Rowdy-Net performs better than the fixed and locally adaptive activation functions. Figure \ref{fig:rowSig} shows the initial and final L-LAAF (top) and Rowdy-Net5 (bottom) activation function for all three hidden-layers. The initial activation function is the standard activation function in each case. In L-LAAF, only the slope of the activation function increases as expected, but in the case of Rowdy-Net5, the final activation functions are very oscillatory. 
In the case of PINNs, the computational cost increases for Rowdy activation function compared to baseline fixed activation function and L-LAAF. One remedy to reduce the computational cost is to employ the transfer learning strategy, i.e.,  the Rowdy-Net can be trained for the initial period (for few hundred iterations) and then this pre-trained model can be used for L-LAAF network for further training. Figure \ref{fig:Test3_h} shows the loss function and relative $L_2$ errors, where the Rowdy-net was trained for the first 1000 iterations and then switched to L-LAAF network for the remaining iterations. Both the loss and error initially jump after switching from Rowdy to L-LAAF networks but they decay thereafter. These results are also compared with L-LAAF as shown by green dash-dot line. Another way to reduce the computational cost associated with the Rowdy-Net training is, directly include the high-frequency components with the strategy of learning rate annealing.

We further test the convergence speed and accuracy of the proposed Rowdy network for a high frequency solution of Helmholtz equation. In this case the exact solution is assumed to be of the form $u(x,y) = \sin(5\pi x)\sin(10\pi y).$
\begin{figure} [htpb] 
\centering
\includegraphics[trim=1.2cm 0cm 0cm 0cm, scale=0.54, angle=0]{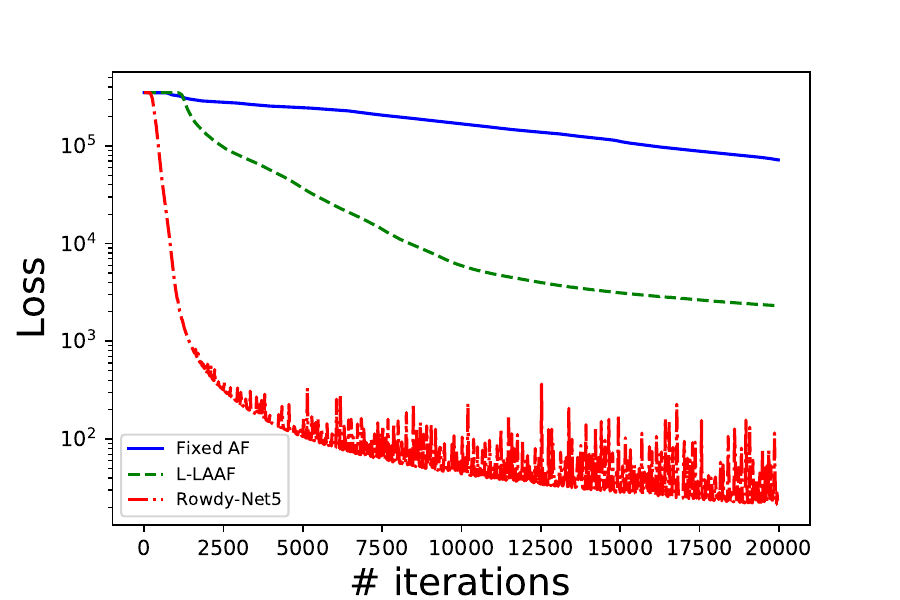}
\includegraphics[trim=1.2cm 0cm 0cm 0cm, scale=0.54, angle=0]{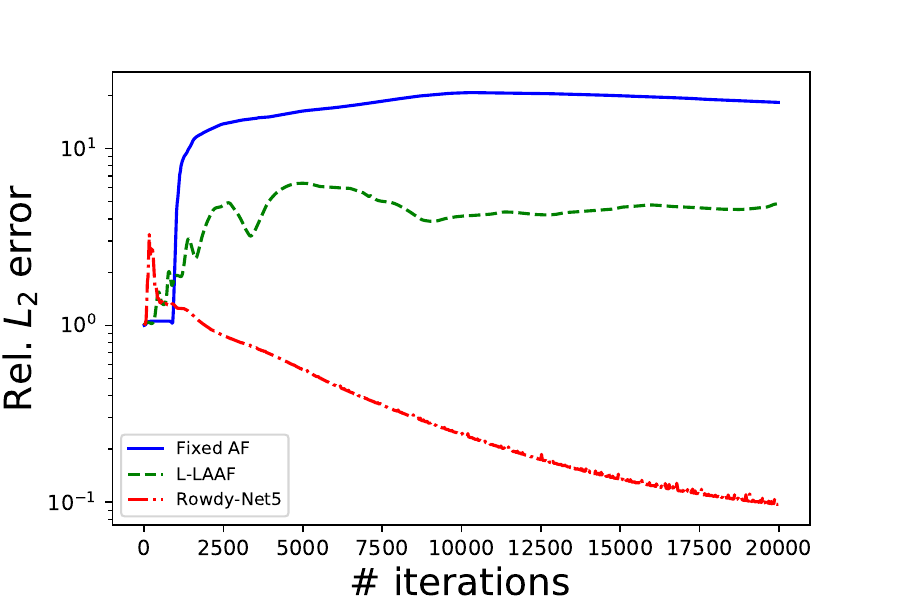}
\caption{High frequency solution of the Helmholtz equation: Loss function (left) and relative $L_2$ error (right) for the fixed activation, L-LAAF and Rowdy activation functions.}\label{fig:Test3_HF}
\end{figure}
\begin{figure} [htpb] 
\centering
\includegraphics[trim=1cm 0cm 0cm 0cm, scale=0.25, angle=0]{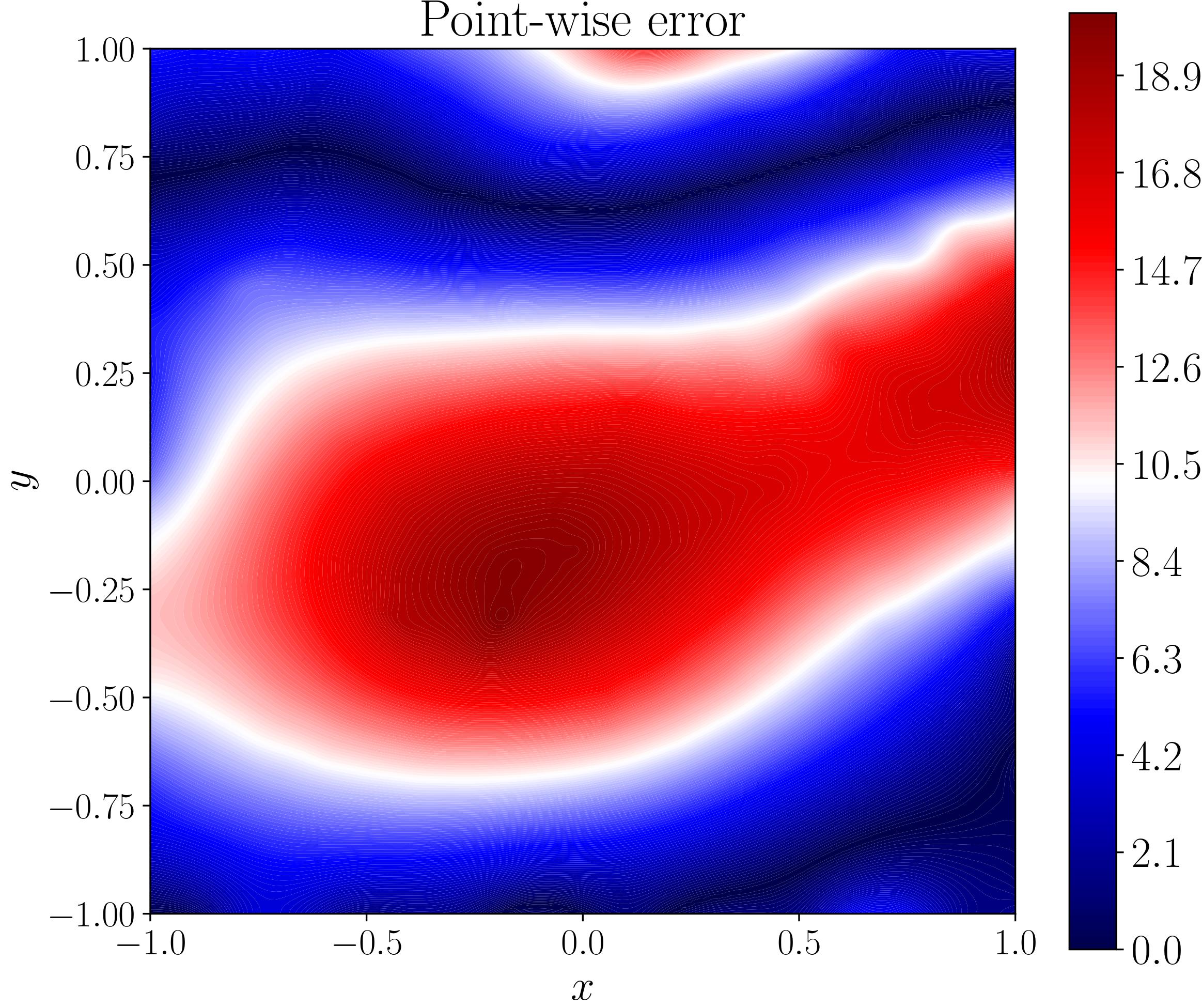}~~
\includegraphics[trim=1cm 0cm 0cm 0cm, scale=0.25, angle=0]{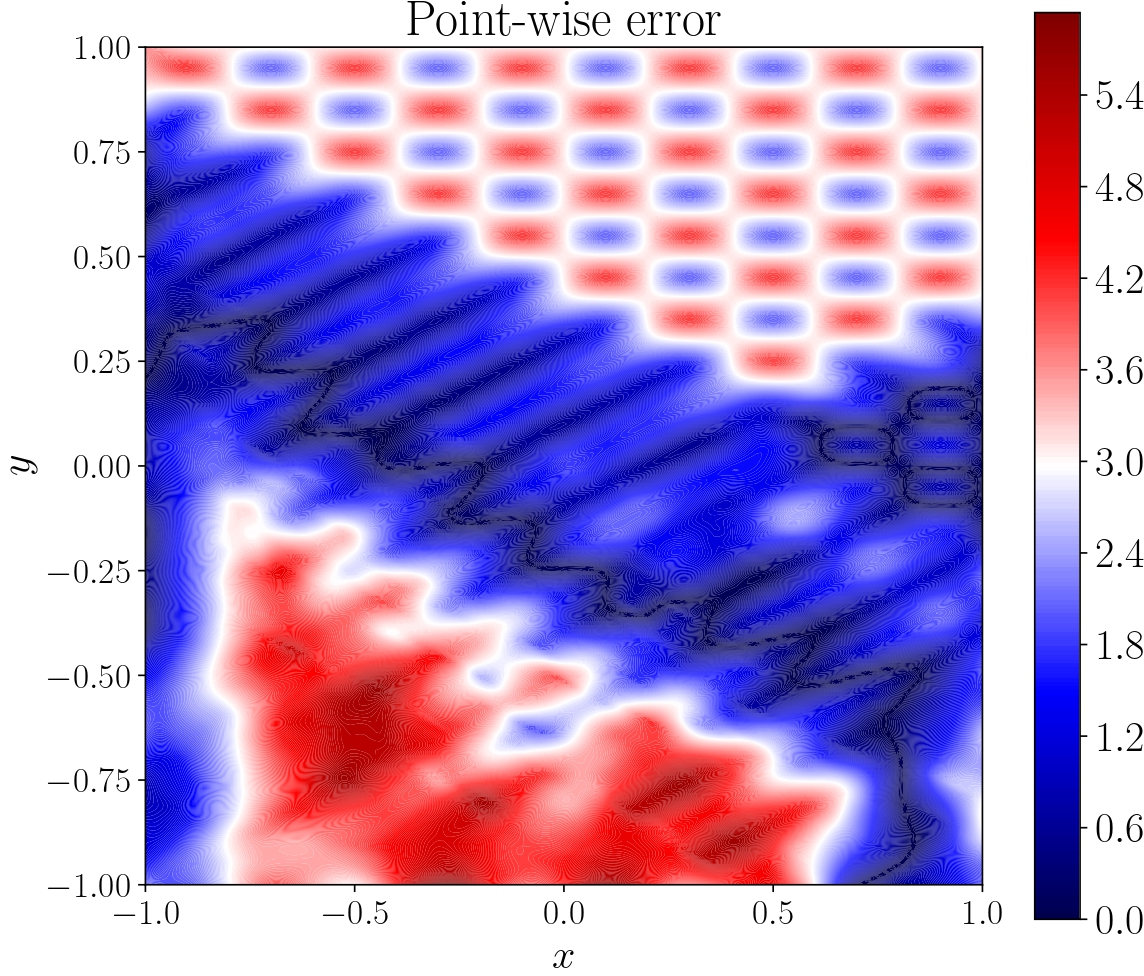}~~
\includegraphics[trim=1cm 0cm 0cm 0cm, scale=0.25, angle=0]{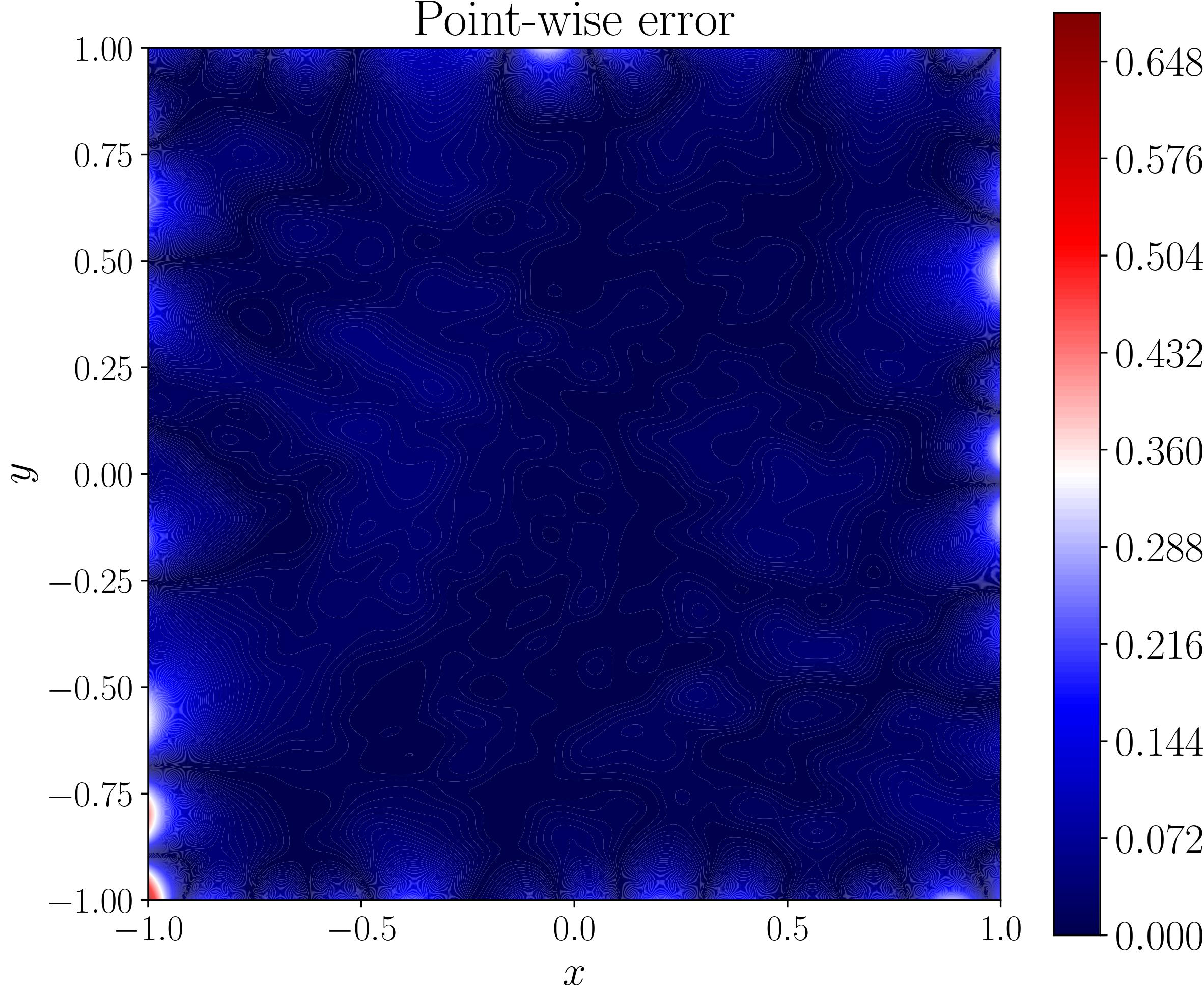}
\caption{High frequency solution of the Helmholtz equation: Point-wise absolute errors after 20k iterations for the fixed activation (left), L-LAAF (middle) and Rowdy activation functions (right).}\label{fig:Test3_HF2}
\end{figure}
Figure \ref{fig:Test3_HF} shows the loss function and relative $L_2$ error for the fixed activation, L-LAAF and Rowdy activation functions. In all cases we used 9e-5 learning rate, the activation function is the hyperbolic tangent, the number of residual points is 10k, and number of boundary data points is 400. The FNN consist of 3 hidden-layers with 60 neurons in each layer. Neither fixed nor L-LAAF converges even after 20k iterations for this problem, whereas the Rowdy-Net5 converges faster. The point-wise absolute error after 20k iterations is shown in figure \ref{fig:Test3_HF2} for the fixed activation, L-LAAF and Rowdy activation functions. The absolute error is large for both fixed and locally adaptive activation functions, whereas Rowdy-Net gives small error.

\subsection{Standard deep learning benchmark problems}
In the previous subsections, we have seen the advantages of the physics-informed Rowdy-Nets. One of the remaining questions is whether or not the advantages remain in the cases without physics information for other types of deep learning applications. This subsection presents numerical results with various standard benchmark problems in deep learning to explore the question.

\begin{figure}[b!]
\center
\begin{subfigure}[b]{0.32\textwidth}
  \includegraphics[width=\textwidth, height=\textwidth]{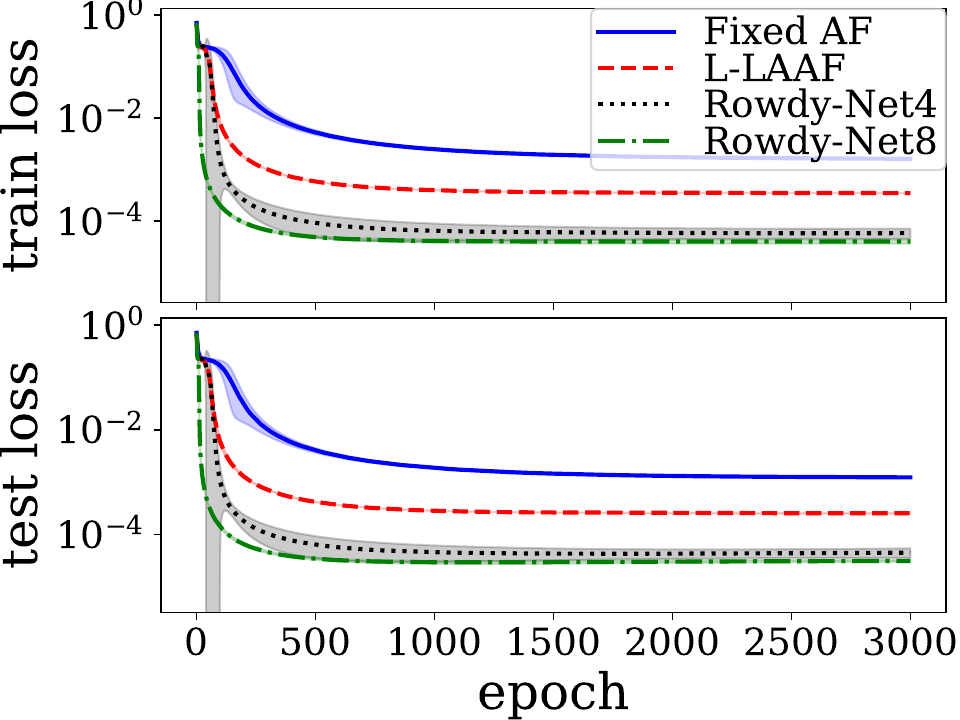}
  \caption{$n=1$}
\end{subfigure}
\begin{subfigure}[b]{0.32\textwidth}
  \includegraphics[width=\textwidth, height=\textwidth]{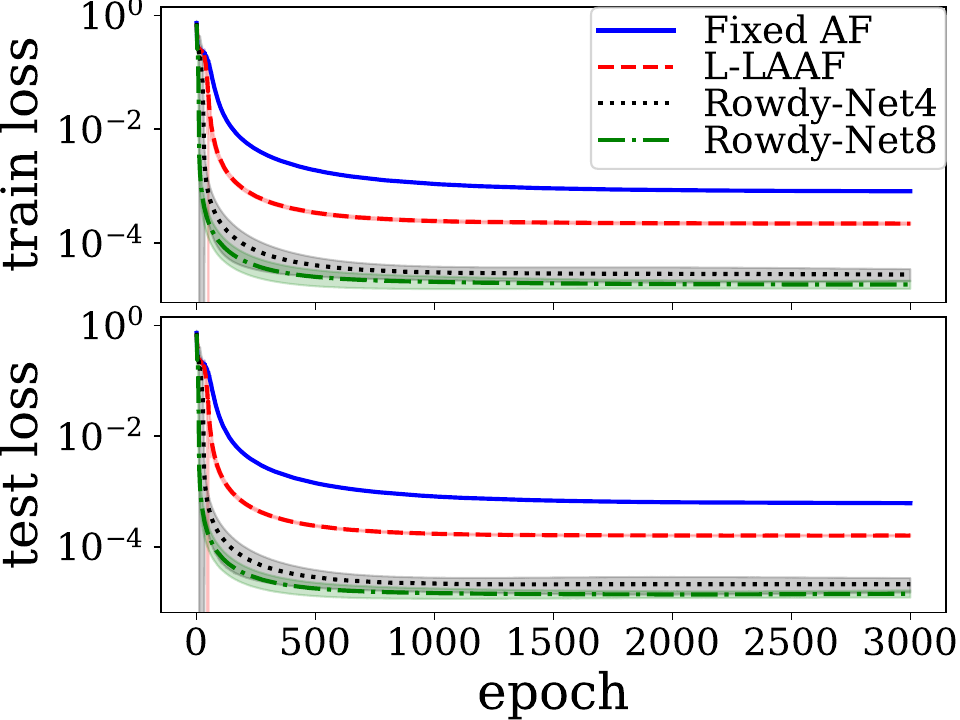}
  \caption{$n=2$}
\end{subfigure}
\begin{subfigure}[b]{0.32\textwidth}
  \includegraphics[width=\textwidth, height=\textwidth]{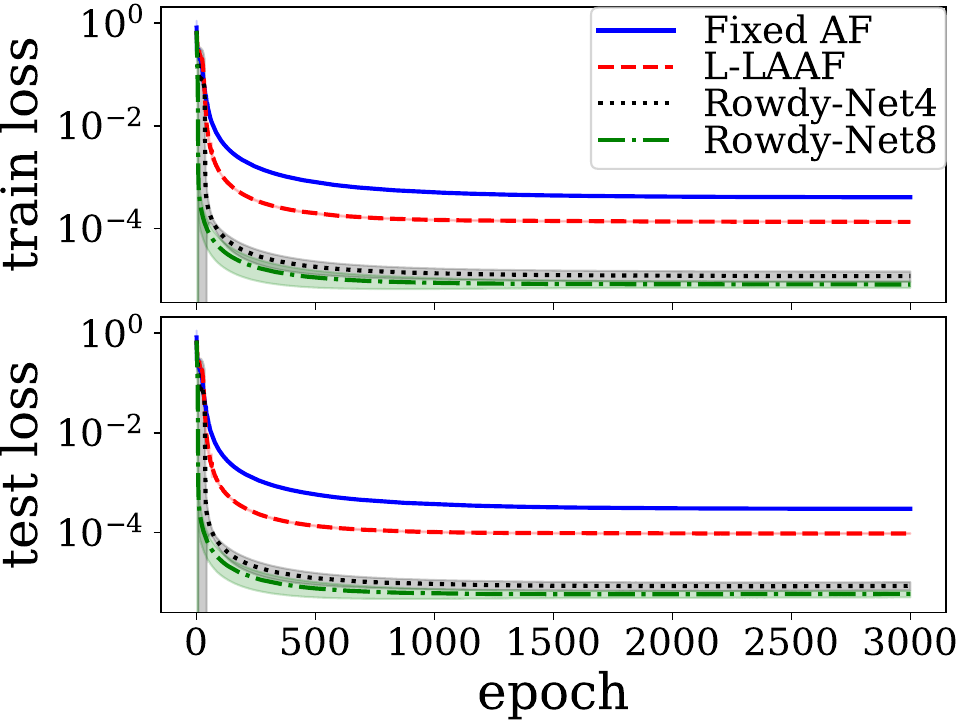}
  \caption{$n=4$}
\end{subfigure}
\caption{Fully-connected  neural networks for the two-moons dataset} 
\label{fig:new:1} 
\end{figure}

\begin{figure}[htpb!]
\center
\begin{subfigure}[b]{0.32\textwidth}
  \includegraphics[width=\textwidth, height=0.9\textwidth]{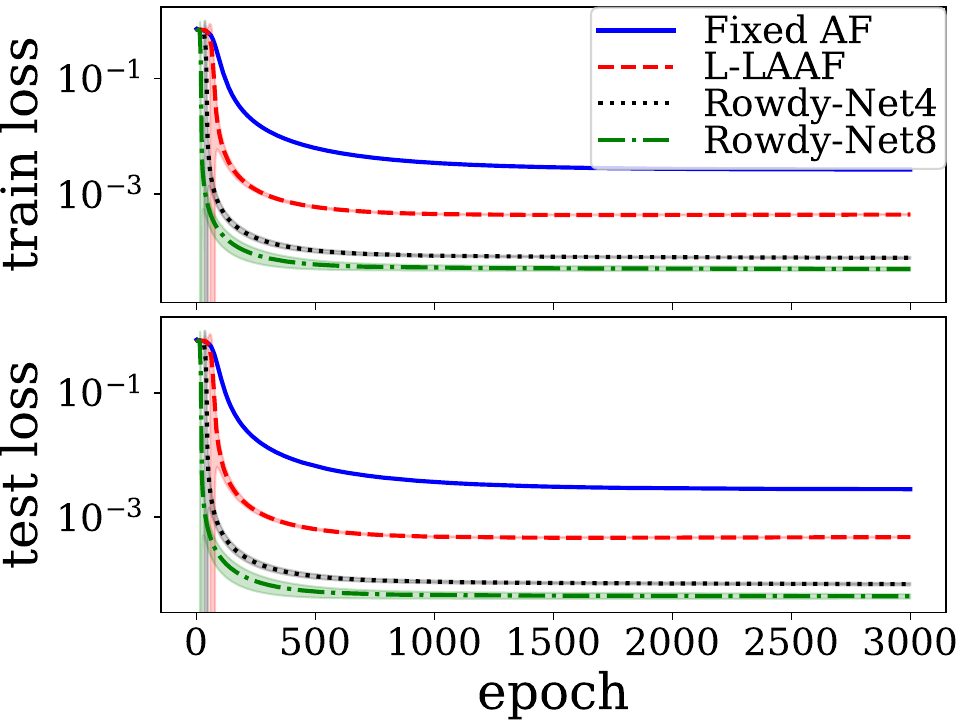}
  \caption{$n=1$}
\end{subfigure}
\begin{subfigure}[b]{0.32\textwidth}
  \includegraphics[width=\textwidth, height=0.9\textwidth]{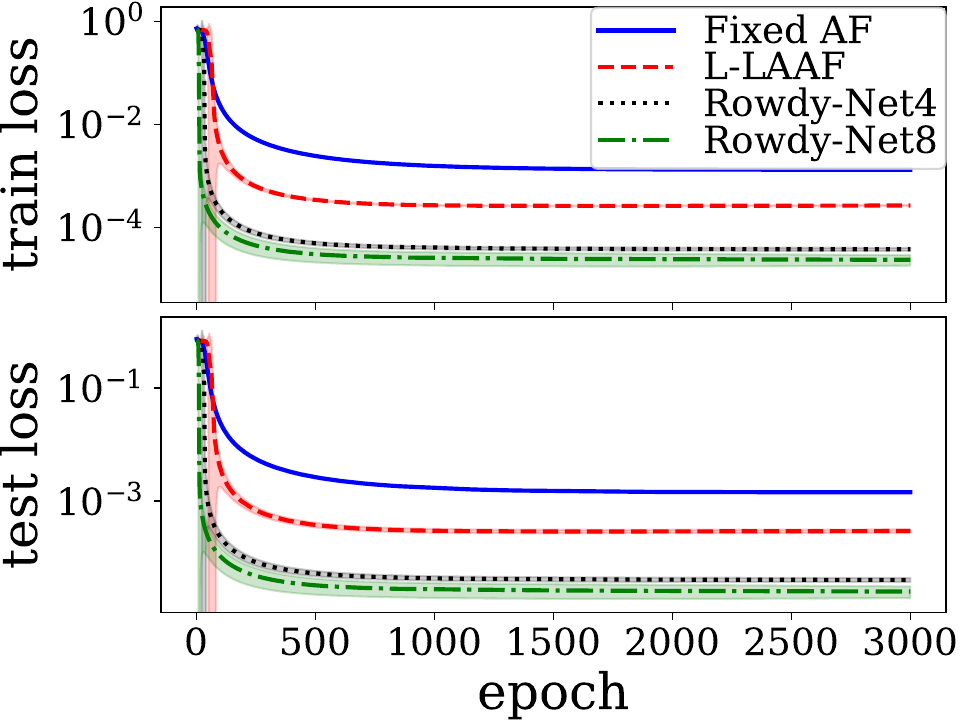}
  \caption{$n=2$}
\end{subfigure}
\begin{subfigure}[b]{0.32\textwidth}
  \includegraphics[width=\textwidth, height=0.9\textwidth]{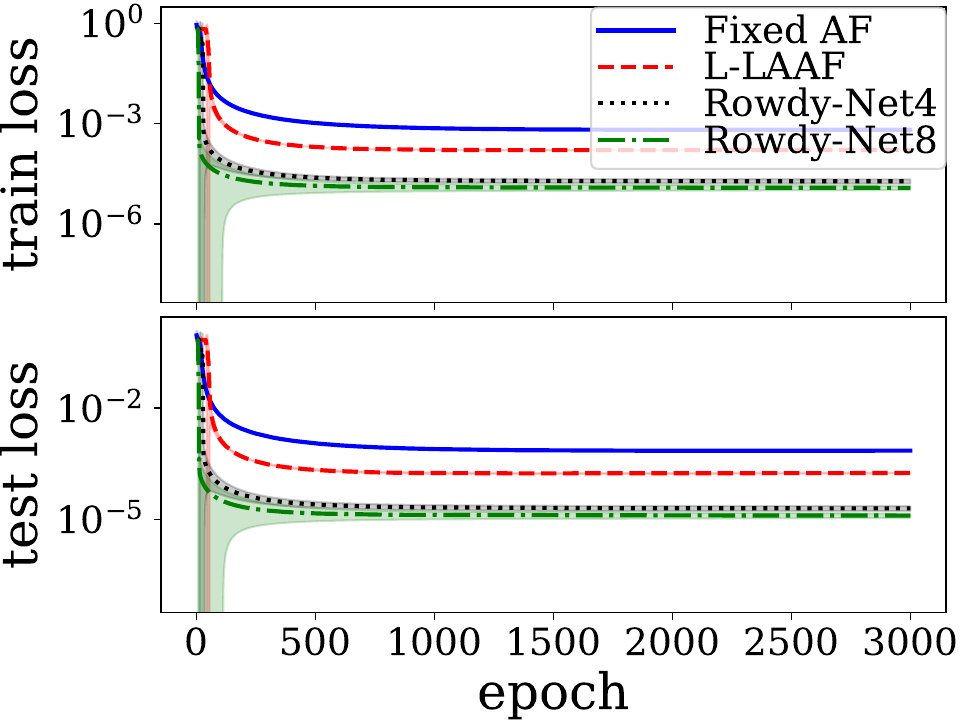}
  \caption{$n=4$}
\end{subfigure}
\caption{Fully-connected  neural networks for the two-circle dataset} 
\label{fig:new:2} 
\end{figure}

\begin{figure}[htpb!]
\center
\begin{subfigure}[b]{0.32\textwidth}
  \includegraphics[width=\textwidth, height=\textwidth]{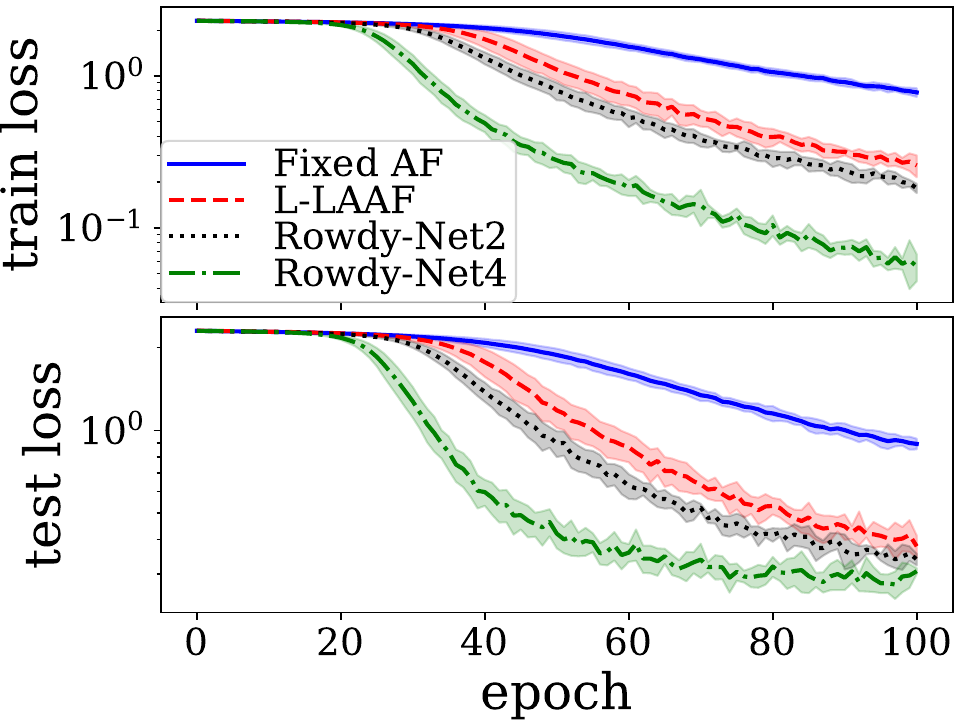}
  \caption{Semeion}
\end{subfigure}
\begin{subfigure}[b]{0.32\textwidth}
  \includegraphics[width=\textwidth, height=\textwidth]{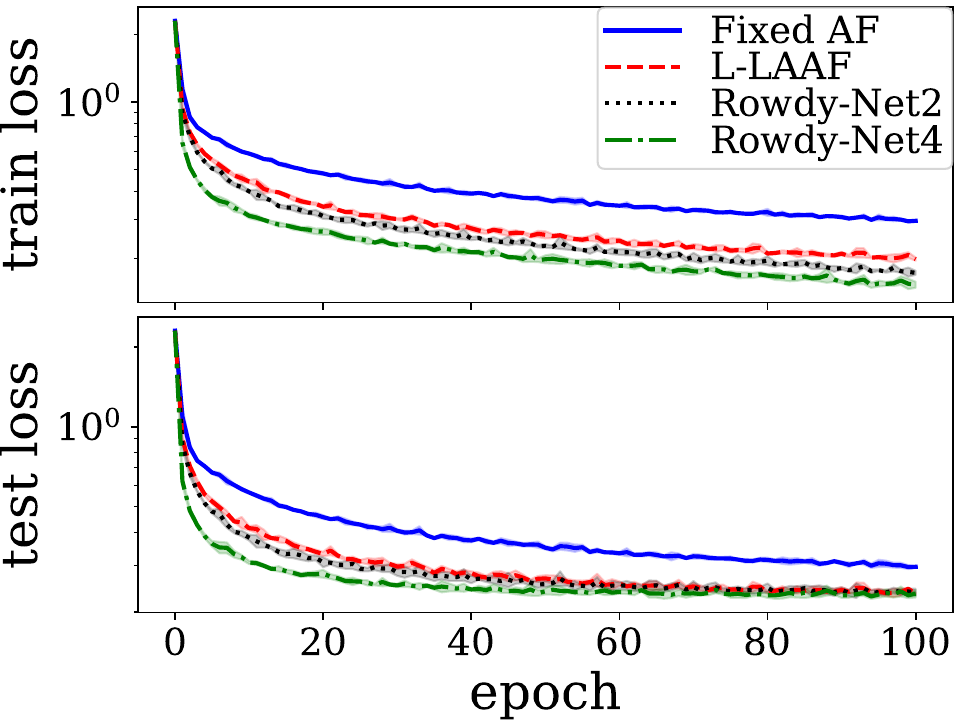}
  \caption{Fashion-MNIST}
\end{subfigure}
\begin{subfigure}[b]{0.32\textwidth}
  \includegraphics[width=\textwidth, height=\textwidth]{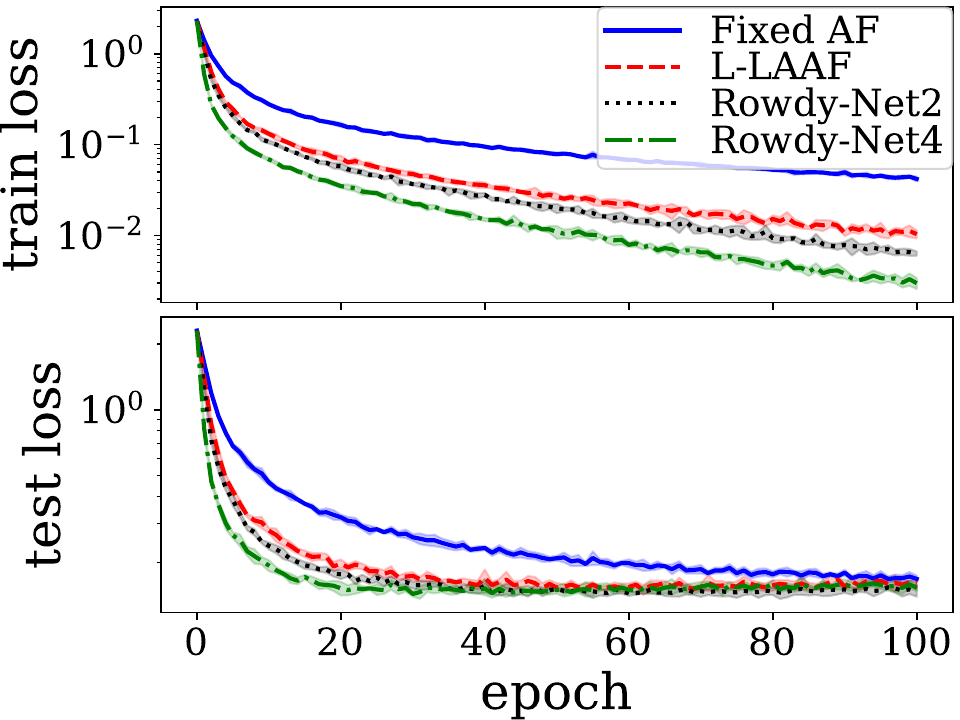}
  \caption{Kuzushiji-MNIST}
\end{subfigure}
\begin{subfigure}[b]{0.32\textwidth}
  \includegraphics[width=\textwidth, height=\textwidth]{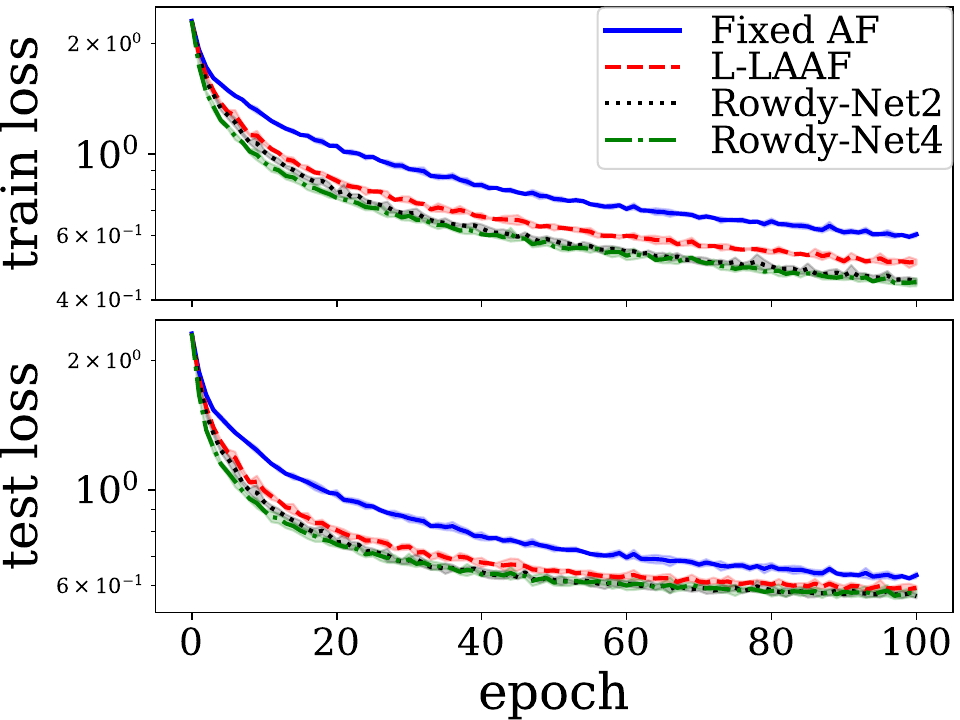}
  \caption{CIFAR-10}
\end{subfigure}
\begin{subfigure}[b]{0.32\textwidth}
  \includegraphics[width=\textwidth, height=\textwidth]{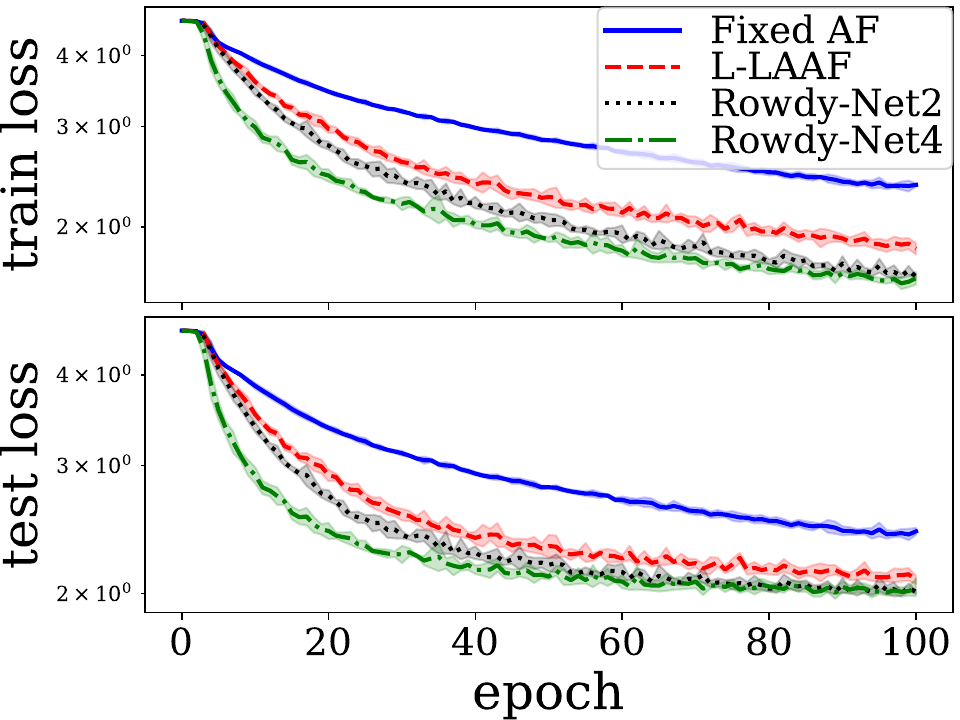}
  \caption{CIFAR-100}
\end{subfigure}
\caption{LeNet for various standard benchmark image datasets } 
\label{fig:new:3} 
\end{figure}

\begin{figure}[hbt]
\begin{minipage}{0.48\hsize}
\centering
\includegraphics[width=0.8\textwidth, height=0.6\textwidth]{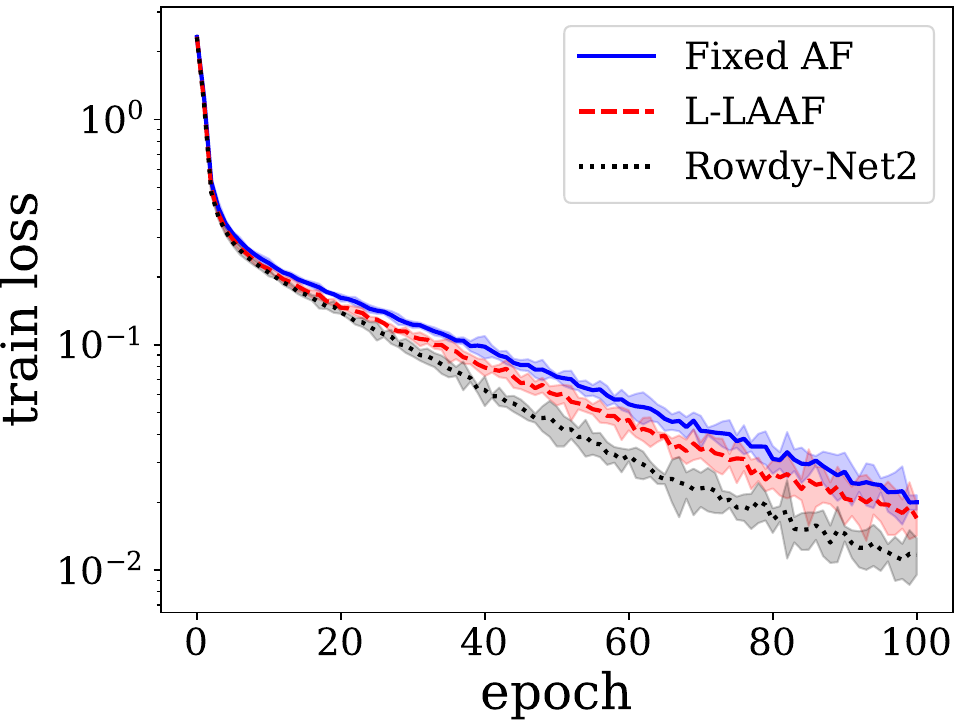}%
  \centering
  \captionof{figure}{ResNet for SVHN: training loss}
  \label{fig:new:4}
\end{minipage}
    \hfill
\begin{minipage}{0.48\hsize}
  \centering
  \begin{tabular}{cc} \hline
  Activation & Test error (\%) \\ \hline
  Fixed AF & 5.36 (0.13) \\
  L-LAAF  & 5.26 (0.20) \\
  Rowdy-Net2 & 4.92 (0.08) \\ \hline
  \end{tabular}
  \centering
\captionof{table}{ResNet for SVHN: test error}
     \label{table:new:1}
\end{minipage}
\end{figure}

\begin{figure}[b!]
\begin{minipage}{0.48\hsize}
\centering
\includegraphics[width=0.8\textwidth, height=0.6\textwidth]{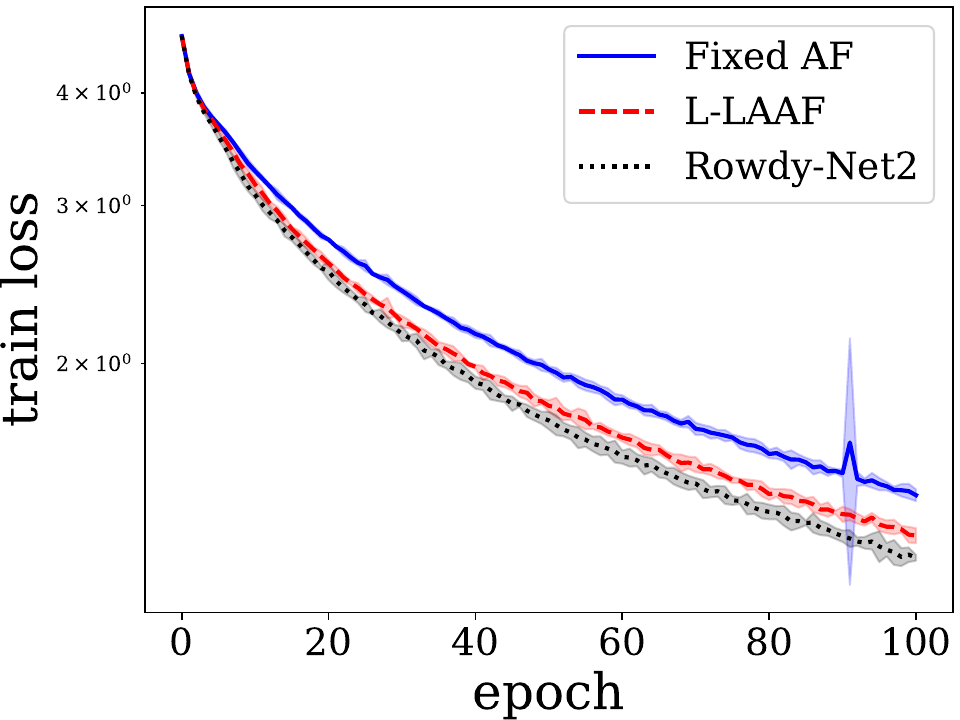}%
  \centering
  \captionof{figure}{ResNet for CIFAR-100: training loss}
  \label{fig:rev1:1}
\end{minipage}
    \hfill
\begin{minipage}{0.48\hsize}
  \centering
  \begin{tabular}{cc} \hline
  Activation & Test error (\%) \\ \hline
  Fixed AF & 35.80 (0.34)  \\
  L-LAAF  & 34.25 (0.29)   \\
  Rowdy-Net2 & 33.40 (0.27)   \\ \hline
  \end{tabular}
  \centering
\captionof{table}{ResNet for CIFAR-100: test error}
     \label{table:rev1:1}
\end{minipage}
\end{figure}

We first report the results for \textit{fully-connected} feed-forward neural networks. The results  are reported in Figures \ref{fig:new:1}--\ref{fig:new:2} with the mean  values and the uncertainty intervals. The   lines  are the mean values over three random trials and the shaded regions represent the intervals of $2 \times$(the sample standard deviations). As can be seen in the figures, the training and testing losses of Rowdy-Nets were lower than those of fixed AF and L-LAAF. In the figures, Rowdy-Net4 uses $K=4$, and Rowdy-Net8 uses $K=8$. For all the experiments, we used the network with three layers and  400 neurons per hidden layer and set the activation functions of all layers of fixed AF\ to be rectified linear unit (ReLU).  Each entry of the weight matrices was initialized independently by the normal distribution with the standard deviation being set to the reciprocal of the square root of   $400$ for all layers. We used the standard two-moons dataset and two-circles dataset with 1000 training data points and 1000 testing data points, generated by the scikit-learn command, \texttt{sklearn.datasets.make\_moons}   and \texttt{sklearn.datasets.make\_circles}    \cite{scikitlearn}. For each dataset, we used mini-batch stochastic gradient descent (SGD) with mini-batch size of 64 and the binary cross-entropy loss. We set the momentum coefficient to be 0.8,  the learning rate to be 0.001, and the weight-decay rate to be $10^{-4}$.

We now report the results for  \textit{convolutional}  deep neural networks. Figure \ref{fig:new:3} shows the results for the following standard variant of LeNet \cite{lecun1998gradient} with five layers: (1) input layer; (2) convolutional hidden layer  with $32  $ filters of size $5$-by-$5$ followed by max-pooling and activation functions; (3) convolutional hidden layer  with $32  $ filters of size $5$-by-$5$ followed by max-pooling and activation functions; (4)
fully-connected hidden layer with $256$ output units followed by activation functions; (5) fully-connected output layer.
Figure \ref{fig:new:4} and Table \ref{table:new:1} present the results for the standard pre-activation ResNet with $18$ layers \cite{he2016identity}. As can be seen in Figure \ref{fig:new:3}--\ref{fig:new:4} and Table \ref{table:new:1}, the Rowdy-Nets  outperformed fixed AF and L-LAAF in terms of training and testing performances for the convolutional networks as well. In the figures, Rowdy-Net2 uses $K=2$, and Rowdy-Net4 uses $K=4$.
In Figures \ref{fig:new:3}--\ref{fig:new:4}, the   lines  are the mean values over five random trials and the shaded regions represent the  intervals of $2 \times$(the sample standard deviations).  Table \ref{table:new:1} reports the mean test error and its standard deviation over five random trials for each method. Similarly, Figure \ref{fig:rev1:1} and Table \ref{table:rev1:1} report the result with the same setting (using the same ResNet) for a larger dataset, CIFAR-100. This additional result for a larger dataset shows qualitatively the same behaviors as those for smaller datasets.

For the convolutional networks, we adopted the standard benchmark datasets in deep learning   ---  Semeion \cite{srl1994semeion},  Fashion-MNIST \cite{xiao2017fashion}, Kuzushiji-MNIST  \cite{clanuwat2019deep}, CIFAR-10 and  CIFAR-100  \cite{krizhevsky2009learning},  and SVHN \cite{netzer2011reading}.  We used all the training and testing data points exactly as provided
by those datasets. For the Semeion dataset, since the default split of training and testing data points is not provided,  we randomly selected  1000 data points as training data points from the original 1593 data points;
the remaining 593 points were used as testing data points. For each dataset, we used the standard data-augmentation of images (e.g., random crop and random horizontal flip for CIFAR-10 and CIFAR-100). Semeion, Fashion-MNIST, and Kuzushiji-MNIST are the datasets with handwritten digits, images of clothing and accessories, and Japanese letters, respectively. CIFAR-10 is a popular dataset containing 50000 training and 10000 testing images in 10 classes with the image resolution of 32 $\times$ 32 with color. CIFAR-100 is a dataset similar to CIFAR-10, but it contains 600 images in each class for   100 classes. SVHN is a dataset containing images of street view house numbers obtained from  Google Street View images. 

As the qualitative behavior  was the same over different values of $n$ with ReLU networks in  Figures \ref{fig:new:1}--\ref{fig:new:2}, we fixed $n=1$ in the experiments for Figures \ref{fig:new:3}--\ref{fig:rev1:1} and Tables \ref{table:new:1}--\ref{table:rev1:1}. In Figures \ref{fig:new:3}--\ref{fig:rev1:1} and Tables \ref{table:new:1}--\ref{table:rev1:1}, all the model parameters  were initialized by the default initialization of PyTorch version 1.4.0 \cite{NEURIPS2019_9015}, which is based on the implementation in the previous work \cite{he2015delving}. Here, we used the cross-entropy loss. All other hyper-parameters for Figures \ref{fig:new:3}--\ref{fig:rev1:1} and Tables \ref{table:new:1}--\ref{table:rev1:1} were fixed to be the same values as the those in the experiments for Figures \ref{fig:new:1}--\ref{fig:new:2}. 

In summary, the experimental results in this subsection show that the Rowdy-Nets have the potential to improve the performance of standard activation functions without any prior physics information.

\section{Summary}
\label{sec:conclusions}
In this work we proposed a novel neural network architecture named as Kronecker neural networks (KNNs), which provides a general framework for neural networks with adaptive activation functions. Employing the Kronecker product in KNN makes the network wide while at the same time the number of trainable parameters remains low. 
For theoretical studies of the KNN, we analyzed its gradient flow dynamics and proved that at least in the beginning of gradient descent training, the loss by KNN is strictly smaller than those by feed-forward networks. 
Furthermore, we also established the global convergence of KNN in an over-fluctuating case.
In the same framework, we proposed a specific version, the Rowdy neural network (Rowdy-Net), which is a neural network with Rowdy activation functions. In the proposed Rowdy activation functions, noise in the form of sinusoidal fluctuations is injected in the activation function thereby removing the saturation zone from the output of every layer in the network, which allows the network to explore more and learn faster. The Rowdy activation functions easily capture the high-frequencies involved in the target function, hence they overcome the problem of spectral bias that is omnipresent in all neural networks as well as in PINNs. We also note that the Rowdy activations can be readily implemented in any neural network architecture. 

In the computational experiments we solved various problems such as function approximation using feed-forward neural networks and partial differential equation using physics-informed neural networks with standard (fixed) activation function, L-LAAF (layer-wise adaptive) and the proposed Rowdy activation functions. In all test cases, we obtained substantial improvement in the training speed as well as in the predictive accuracy of the solution. Moreover, the proposed Rowdy activation functions was shown to accelerate the minimization process of the loss values in various standard deep learning benchmark problems such as MNIST, CIFAR, SVHN etc., in agreement with the theoretical results.

\section*{Acknowledgments}
This work was supported by the U.S. Department of Energy PhILMs grant DE-SC0019453 and  OSD/AFOSR MURI Grant FA9550-20-1-0358.

\appendix
\section{Proof of Theorem~\ref{thm:small-loss}} \label{app:thm:small-loss}

\begin{proof}
    Let $\Theta$ be the set of parameters to be trained.
    Let
    \begin{equation*}
        \text{Res}(X) = [\text{Res}(x_1),\cdots, \text{Res}(x_m)]^T \in \mathbb{R}^m
    \end{equation*}
    where 
    $\text{Res}(x_j) = u(x_j;\Theta) - y_j$.
    Since 
    $L(\Theta) = \frac{1}{2}\|\text{Res}(X)\|^2$
    and 
    $\frac{\partial \text{Res}(x_j)}{\partial \Theta}=\frac{\partial u(x_j;\Theta)}{\partial \Theta}$, we have
    \begin{equation*}
        \nabla_{\Theta} L(\Theta) = \sum_{j=1}^m \text{Res}(x_j) \cdot 
        \frac{\partial u(x_j;\Theta)}{\partial \Theta}.
    \end{equation*}
    Note that 
    \begin{align*}
        \frac{d}{dt}L(t)
        &=  \frac{1}{2} \frac{d}{dt} \langle \text{Res}(X), \text{Res}(X) \rangle 
        =  \left\langle \text{Res}(X), \frac{d}{dt}\text{Res}(X) \right\rangle  =  \left\langle \text{Res}(X), \frac{d}{dt}\text{u}(X;\Theta(t)) \right\rangle.
    \end{align*}
    Let $\tilde{x} = [x, 1]^T$ and $v_i = [w_i; b_i]$. 
    Suppose $u(x;\Theta) = \sum_{i=1}^N c_i \left[\sum_{k=1}^K \alpha_k\phi_k(\omega_k\tilde{x}^Tv_i)\right]$,
    where $\Theta = \{c_i,v_i\}_{i=1}^N \cup \{\alpha_i,\omega_i\}_{i=1}^K$.
    Note that if the standard FF is considered,
    one can simply drop the terms related $\alpha$ and $\omega$.
    From the gradient-flow dynamics \eqref{def:GradFlow}, we have
    \begin{align*}
        \dot{c}_i(t) &= - \sum_{j=1}^m   \left[\sum_{k=1}^K \alpha_k\phi_k(\omega_k\tilde{x}_j^Tv_i)\right] \text{Res}(x_j) = - C_i \cdot Res(X), \\
        \dot{v}_i(t)
        &= - c_i \sum_{j=1}^m \tilde{x}_j  \left(\sum_{k=1}^K \alpha_k \omega_k \phi_k'(\omega_k \tilde{x}_j^Tv_i) \right) \text{Res}(x_j)  = -B_i \cdot Res(X), \\
        \dot{\alpha}_k &= - \sum_{j=1}^m \left(\sum_{i=1}^N c_i \phi_k(\omega_k \tilde{x}_j^Tv_i) \right) \text{Res}(x_j) = -A_k \cdot \text{Res}(X), \\
        \dot{\omega}_k &= - \sum_{j=1}^m \left(\sum_{i=1}^N c_i \alpha_k \tilde{x}_j^Tv_i  \phi_k'(\omega_k \tilde{x}_j^Tv_i) \right) \text{Res}(x_j) = -\Omega_k \cdot \text{Res}(X),
    \end{align*}
    for $1 \le i \le n$, $1\le k \le K$, and 
    \begin{equation} \label{def:M}
        \begin{split}
            C_i &= \begin{bmatrix}
        \sum_{k=1}^K \alpha_k\phi_k(\omega_k\tilde{x}_1^Tv_i), \cdots, \sum_{k=1}^K \alpha_k\phi_k(\omega_k\tilde{x}_m^Tv_i)
        \end{bmatrix} \in \mathbb{R}^m, \\
        B_i &= c_i\begin{bmatrix}
        \tilde{x}_1  \left(\sum_{k=1}^K \alpha_k \omega_k \phi_k'(\omega_k \tilde{x}_1^Tv_i) \right),
        & \cdots
        &
        \tilde{x}_m  \left(\sum_{k=1}^K \alpha_k \omega_k \phi_k'(\omega_k \tilde{x}_m^Tv_i) \right) 
        \end{bmatrix} \in \mathbb{R}^{(d+1)\times m}, \\
        A_k &= \begin{bmatrix}
        \sum_{i=1}^N c_i \phi_k(\omega_k \tilde{x}_1^Tv_i),
        & \cdots & \sum_{i=1}^N c_i \phi_k(\omega_k \tilde{x}_m^Tv_i)
        \end{bmatrix} \in \mathbb{R}^m, \\
        \Omega_k &=
        \alpha_k
        \begin{bmatrix}
        \sum_{i=1}^N c_i  \tilde{x}_1^Tv_i  \phi_k'(\omega_k \tilde{x}_1^Tv_i), & \cdots & 
        \sum_{i=1}^N c_i  \tilde{x}_m^Tv_i  \phi_k'(\omega_k \tilde{x}_m^Tv_i)
        \end{bmatrix} \in \mathbb{R}^m.
        \end{split}
    \end{equation}
    Let $\bm{C} = [C_1; \cdots; C_N] \in \mathbb{R}^{N\times m}$,
    $\bm{B} = [B_1; \cdots; B_N] \in \mathbb{R}^{(d+1)N \times m}$,
    $\bm{A} = [A_1; \cdots; A_K] \in \mathbb{R}^{K\times m}$
    and 
    $\bm{\Omega} = [\Omega_1; \cdots; \Omega_K] \in \mathbb{R}^{K\times m}$.
    By letting 
    $\bm{v} = (v_j) \in \mathbb{R}^{(d+1)N}$,
    $\alpha = (\alpha_j),
    \omega = (\omega_j)
    \in \mathbb{R}^K$,
    $\bm{C} = (c_j) \in \mathbb{R}^N$,
    and 
    $\Theta = \begin{bmatrix}
     \bm{c} \\ \bm{v} \\ \alpha \\ \omega
    \end{bmatrix} \in \mathbb{R}^{(d+2)n + 2K}$,
    we have
    \begin{align*}
        \dot{\bm{c}} = -\bm{C}\cdot \text{Res}(X), \qquad
        \dot{\bm{v}} = -\bm{B}\cdot \text{Res}(X), \qquad
        \dot{\bm{\alpha}} = -\bm{A}\cdot \text{Res}(X), \qquad
        \dot{\bm{\omega}} = -\bm{\Omega} \cdot \text{Res}(X).
    \end{align*}
    Let
    $\bm{M} = \begin{bmatrix}
        \bm{C}; \bm{B}; \bm{A}; \bm{\Omega}
        \end{bmatrix} \in \mathbb{R}^{((d+2)n + 2K) \times m}$.
    Since $\dot{\Theta} = -\bm{M}\cdot \text{Res}(X)$, 
    it can be checked that ,
    \begin{align*}
        \frac{d}{dt}\text{u}(X;\Theta(t))
        = \bm{M}^T\dot{\Theta}
        = - \bm{M}^T\bm{M} \cdot \text{Res}(X).
    \end{align*}
    Therefore,
    \begin{align*}
        \frac{d}{dt}L(t)
        &=  \left\langle \text{Res}(X), \frac{d}{dt}\text{u}(X;\Theta(t)) \right\rangle
        = - \|\bm{M}\cdot \text{Res}(X)\|^2
        \\
        &= - \|\begin{bmatrix}
        \bm{C} \\ \bm{B} \\ \bm{\Omega}
        \end{bmatrix} \cdot \text{Res}(X)\|^2
        - \|\bm{A} \cdot \text{Res}(X)\|^2.
    \end{align*}
    It follows from Lemma~\ref{lem:nonsingular-A} that 
    with probability 1 over initialization, we have
    $$
    \|\bm{A} \cdot \text{Res}(X)\|^2 \ge \sigma_{\min}^2(\bm{A}) \|\text{Res}(X)\|^2 > 0,
    $$
    which implies 
    \begin{align*}
        \frac{d}{dt}L^{\text{Rowdy}}(0) < \frac{d}{dt}L^{\text{FF}}(0) \le 0.
    \end{align*}
    Note that $\bm{\Psi}$ in Lemma~\ref{lem:nonsingular-A}
    is $\bm{A}^T$, $m \le K$ by Assumption~\ref{assumption:activation},
    and $\sigma_{\min}(\bm{A})$
    is the $m$-th largest singular value of $\bm{A}$.
    Since $\phi_k$'s are in $C^1$,
    it follows from the Peano existence theorem \cite{perko2013differential}
    that the gradient flow admits a solution $\Theta(t)$ in a neighborhood $I_0$ of $t=0$.
    Since $L^{\text{Rowdy}}(0) = L^{\text{FF}}(0)$,
    there exists $T > 0$ such that 
    for all $t \in (0, T)$,
    \begin{align*}
        L^{\text{Rowdy}}(t) <  L^{\text{FF}}(t),
    \end{align*}
    which shows that the Rowdy network induces a smaller training loss value
    in the beginning phase of the training.
\end{proof}

\begin{lemma} \label{lem:nonsingular-A}
    Suppose Assumptions~\ref{assumption:init} and ~\ref{assumption:activation} hold.
    For any non-degenerate data points $\{x_j\}_{j=1}^m$ where $m \le K$,
    with probability 1 over $\{c_i, v_i\}_{i=1}^N$,
    \begin{align*}
        [\bm{\Psi}]_{kj} = \psi_k(x_j), 
        \quad
        \text{where} \quad 
        \psi_k(x) = \sum_{i=1}^N c_i\phi_k(v_i^T\tilde{x}), \quad
        \tilde{x} = [x; 1], 
        \qquad 1\le k \le K, 1 \le j \le m,  
    \end{align*}
    is full rank.
    For the later use, let $\lambda_0$ be the $m$-th smallest singular value of $\bm{\Psi}$.
\end{lemma}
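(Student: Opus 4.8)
The plan is to write $\bm{\Psi}$ as a linear (hence polynomial) function of $c=(c_1,\dots,c_N)$ whose coefficient matrices are almost surely of full column rank, and then to combine a measure-zero argument in the Gaussian vectors $v_i$ with a nonvanishing-polynomial argument in $c$. First I would observe that, writing $\tilde x_j=[x_j;1]$, the differences $\tilde x_j-\tilde x_{j'}$ for $j\neq j'$ are nonzero, since their first $d$ entries equal $x_j-x_{j'}\neq 0$ by non-degeneracy of the data. Hence each $H_{jj'}=\{v\in\mathbb{R}^{d+1}: v^T(\tilde x_j-\tilde x_{j'})=0\}$ is a proper hyperplane and carries Gaussian measure zero. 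Since $v_i\sim N(0,I_{d+1})$ by Assumption~\ref{assumption:init}, with probability $1$ the scalars $v_i^T\tilde x_1,\dots,v_i^T\tilde x_m$ are pairwise distinct, simultaneously for all $i=1,\dots,N$. On that event Assumption~\ref{assumption:activation}, applied to the distinct points $z_j=v_i^T\tilde x_j$ and using $K\ge m$, guarantees that the $K\times m$ matrix $\Phi^{(i)}$ with $[\Phi^{(i)}]_{kj}=\phi_k(v_i^T\tilde x_j)$ has full column rank $m$.

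\textbf{Step 2 (a nonvanishing polynomial in $c$).} Directly from the definition of $\bm{\Psi}$ in the statement, $\bm{\Psi}=\sum_{i=1}^N c_i\,\Phi^{(i)}$, so conditionally on $(v_1,\dots,v_N)$ the map $c\mapsto\bm{\Psi}$ is linear and each of its $\binom{K}{m}$ maximal ($m\times m$) minors is a polynomial in $c$. Fixing $(v_1,\dots,v_N)$ in the probability-one event of Step~1 and evaluating at $c=e_1$ (the first standard basis vector of $\mathbb{R}^N$) gives $\bm{\Psi}=\Phi^{(1)}$, which has rank $m$; hence at least one maximal minor is nonzero at $c=e_1$ and is therefore not the zero polynomial. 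The zero set of a nonzero polynomial on $\mathbb{R}^N$ has Lebesgue measure zero, and since by Assumption~\ref{assumption:init} the $c_i$ are drawn independently from a continuous (non-atomic) distribution, this minor vanishes with conditional probability $0$ (one argues by induction on $N$, using non-atomicity of the last variable). Thus $\mathrm{rank}\,\bm{\Psi}=m$ with conditional probability $1$.

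\textbf{Step 3 (conclusion).} Averaging the conditional probability over the law of $(v_1,\dots,v_N)$ yields that $\bm{\Psi}$ has full rank $m$ with probability $1$ over $\{c_i,v_i\}_{i=1}^N$; in particular its smallest singular value $\lambda_0=\sigma_{\min}(\bm{\Psi})$ is strictly positive almost surely, which is the form in which the lemma is used in Theorems~\ref{thm:small-loss} and~\ref{thm:convergence}.

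\textbf{Main obstacle.} The delicate point is that $\phi_k$ is only assumed $C^1$, so one cannot invoke the usual ``a real-analytic function vanishing on a positive-measure set vanishes identically'' argument in the variables $v_i$ (plain continuity only shows the full-rank locus is open, not co-null). The device above sidesteps this by placing all the analyticity on the $c$-dependence, which is exactly linear, and by using Assumption~\ref{assumption:activation} only through the almost-sure distinctness of $\{v_i^T\tilde x_j\}_j$, thereby requiring nothing of $\phi_k$ beyond continuity. A secondary technical check is that ``continuous distribution'' in Assumption~\ref{assumption:init} is used precisely in the sense that the law of each $c_i$ assigns zero mass to finite sets, which suffices for the inductive argument on polynomial zero sets.
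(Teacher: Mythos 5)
Your proof is correct, and it follows the same overall strategy as the paper -- almost-sure distinctness of the projections $v_i^T\tilde{x}_j$, Assumption~\ref{assumption:activation} to get full-rank neuron-wise matrices $\Phi^{(i)}$, and then genericity over $c$ -- but it executes the $c$-step differently. The paper argues by contradiction: it supposes a vanishing linear combination of $\bm{\Psi}$ with coefficients $\delta$, writes each entry as $c^T\bigl(\sum_k \delta_k \Phi_k(x_i)\bigr)$, and invokes ``probability 1 over $c$'' to force the inner vector to vanish, after which Assumption~\ref{assumption:activation} kills $\delta$; this is shorter but glosses over the fact that $\delta$ is itself a function of the random $c$ (and its indexing of rows versus columns is a bit loose), so the per-$\delta$ probability-one claim needs care. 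Your version instead exploits the exact linearity $\bm{\Psi}=\sum_i c_i\Phi^{(i)}$, certifies that some maximal minor is a nonzero polynomial in $c$ by evaluating at $c=e_1$ (so only one neuron's $\Phi^{(1)}$ is needed), and concludes by the standard zero-set-of-a-nonzero-polynomial argument plus Fubini; this sidesteps the quantifier issue entirely and uses only non-atomicity of the law of $c$. Both routes use Assumption~\ref{assumption:activation} in the same way; yours buys a cleaner measure-theoretic justification at the cost of a slightly longer argument, and your closing remark that $\lambda_0=\sigma_{\min}(\bm{\Psi})>0$ almost surely matches how the lemma is actually used downstream.
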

\begin{proof}
    Let $\bm{\Psi}_{k,:}$
    and $\bm{\Psi}_{:,j}$ be
    the $k$-th row and the $j$-th column of $\bm{\Psi}$, respectively.
    Suppose 
    \begin{align*}
        \sum_{k=1}^m \delta_k \bm{\Psi}_{k,:} = 0,
    \end{align*}
    for some $\delta = [\delta_1,\cdots, \delta_m]^T$.
    Then, for each $j=1,\cdots, m$, we have
    \begin{align*}
        0 = \sum_{k=1}^m \delta_k \psi_k(x_i) 
        =  \sum_{k=1}^m \delta_k 
        c^T \Phi_k(x_i)
        = c^T\left(\sum_{k=1}^m \delta_k \Phi_k(x_i) \right), 
        \quad \text{where} \quad
        \Phi_k(x_i) = \begin{bmatrix}
        \phi_k(v_1^T\tilde{x}_i) \\
        \vdots \\
        \phi_k(v_N^T\tilde{x}_i)
        \end{bmatrix}.
    \end{align*}
    Hence, with probability 1 over the initialization of $c$, 
    we have
    \begin{align*}
        \sum_{k=1}^m \delta_k \Phi_k(x_i) = 0
        \iff
        \sum_{k=1}^m \delta_k \phi_k(v_s^T\tilde{x}_i) = 0 \quad \forall 1\le s \le n, 1 \le i \le m.
    \end{align*}
    With probability 1 over $v_s$, 
    $v_s^Tx_1,\cdots, v_s^Tx_m$ are distinct.
    It then follows from Assumption~\ref{assumption:activation}
    that for each $s = 1,\cdots, n$,
    $$
    [\bm{\Phi}_s]_{ik} = \phi_k(v_s^T\tilde{x}_i), \qquad 1 \le i \le m, 1 \le k \le m,
    $$ 
    is full rank.
    Therefore, we conclude that $\delta_1 = \cdots =\delta_K = 0$,
    which implies that $\bm{\Psi}$ is also full rank.
\end{proof}

\section{Proof of Theorem~\ref{thm:convergence}} \label{app:thm:convergence}

\begin{proof}
    It follows from the proof of Theorem~\ref{thm:small-loss}
    that  
    \begin{align*}
        \|\frac{d}{dt} \Theta(t) \|^2 = - \frac{d}{dt} L(t) \implies
        \|\Theta(t) - \Theta(0) \| \le \sqrt{L(0)},
    \end{align*}
    for all $t \ge 0$.
    By Lemma~\ref{lemma:misfit-init}, 
    with probability at least $1- e^{-\frac{m\delta^2}{2\|X\|^2}}$, 
    \begin{align*}
        \sqrt{2L(0)} \le \|\bm{y}\|\left(1 + (1+\delta)B/K \right).
    \end{align*}
    Note that 
    \begin{align*}
        &\frac{\|\bm{y}\|\left(1 + (1+\delta)B/K \right)}{\sqrt{2}} \le \frac{\lambda_0}{2\sqrt{1 + (\max_j \|\tilde{x}_j\|\cdot \|\alpha\|_{\infty})^2} \cdot \|c\|_1\cdot B\sqrt{Km}}  \\
        \iff&
        \|c\|_1 \le \frac{\lambda_0}{\sqrt{2}\|\bm{y}\|\left(1 + (1+\delta)B/K \right) \sqrt{1 + (\max_j \|\tilde{x}_j\|\cdot \|\alpha\|_{\infty})^2} \cdot B\sqrt{Km}}  \\
        \iff&
        \frac{\|\bm{y}\|}{K\sqrt{m}}
        \le \frac{\lambda_0}{\sqrt{2}\|\bm{y}\|\left(1 + (1+\delta)B/K \right) \sqrt{1 + (\max_j \|\tilde{x}_j\|\cdot \|\alpha\|_{\infty})^2} \cdot B\sqrt{Km}} 
        \\
        \iff&
        \frac{1}{\sqrt{K}}
        \le \frac{\lambda_0}{\sqrt{2}\|\bm{y}\|^2\left(1 + (1+\delta)B/K \right) \sqrt{1 + (\max_j \|\tilde{x}_j\|\cdot \|\alpha\|_{\infty})^2} \cdot B}.
    \end{align*}
    Therefore,
    if
    \begin{align*}
        \frac{1}{\sqrt{K}}
        \le \frac{\lambda_0}{\sqrt{2}\|\bm{y}\|^2\left(1 + (1+\delta)B/K \right) \sqrt{1 + (\max_j \|\tilde{x}_j\|\cdot \|\alpha\|_{\infty})^2} \cdot B},
    \end{align*}
    with probability at least $1- e^{-\frac{m\delta^2}{2\|X\|^2}}$, we have
    \begin{align*}
        \|\Theta(t) - \Theta(0) \| \le \frac{\lambda_0}{2\sqrt{1 + (\max_j \|\tilde{x}_j\|\cdot \|\alpha\|_{\infty})^2} \cdot \|c\|_1\cdot B
        \sqrt{Km}}
    \end{align*}
    and it  follows from Lemma~\ref{lemma:nonsingular-A-t} that 
    $\sigma_{\min}(\bm{A}(t)) \ge \frac{\lambda_0}{2}$
    for all $t \ge 0$.
    Therefore,
    \begin{align*}
        L(t) \le L(0)e^{-\frac{\lambda_0}{2} t}, \qquad \forall t \ge 0.
    \end{align*}
    Since $\max_j \|\tilde{x}_j\| = 1$
    and $\|\alpha\|_{\infty} \le 1$,
    the proof is completed.
\end{proof}

\begin{lemma} \label{lem:perturbation}
    Suppose Assumptions~\ref{assumption:init} and ~\ref{assumption:activation}
    hold.
    Let $\Theta = \{\alpha_i\}_{i=1}^K \cup \{v_i\}_{i=1}^N$
    and $\tilde{\Theta} = \{\tilde{\alpha}_i\}_{i=1}^K \cup \{\tilde{v}_i\}_{i=1}^N$.
    Suppose $\phi_k(x)$'s are bounded by $B$ in $\mathbb{R}$.
    Then,
    \begin{align*}
        \|\bm{\Psi}(\Theta) - \bm{\Psi}(\Theta(0))\| \le
        \sqrt{1 + (\max_j \|\tilde{x}_j\|\cdot \|\alpha(0)\|_{\infty})^2} \cdot \|c\|_1\cdot B
        \sqrt{Km} \cdot \|\Theta - \Theta(0)\|_F,
    \end{align*}
    where $\bm{\Psi}$ is defined in Lemma~\ref{lem:nonsingular-A}.
\end{lemma}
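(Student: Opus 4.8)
The plan is to reduce the operator-norm bound to an entrywise estimate and then a one-variable calculus argument. Since $\bm{\Psi}(\Theta)-\bm{\Psi}(\Theta(0))$ is a $K\times m$ matrix, I would start from $\|M\|\le\|M\|_F$ together with the crude bound $\|M\|_F\le\sqrt{Km}\,\max_{k,j}|M_{kj}|$, so that it suffices to prove, for every $k,j$,
\begin{equation*}
  \bigl|[\bm{\Psi}(\Theta)]_{kj}-[\bm{\Psi}(\Theta(0))]_{kj}\bigr|\;\le\;\sqrt{1+(\max_j\|\tilde{x}_j\|\cdot\|\alpha(0)\|_{\infty})^2}\;\|c\|_1\,B\;\|\Theta-\Theta(0)\|_F .
\end{equation*}

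Fix $(k,j)$ and write the entry over the $N$ neurons, $[\bm{\Psi}(\Theta)]_{kj}=\alpha_k\sum_{i=1}^N c_i\phi_k(v_i^T\tilde{x}_j)$. I would split the difference by anchoring the change of $\{v_i\}$ at $\alpha_k(0)$:
\begin{equation*}
  [\bm{\Psi}(\Theta)]_{kj}-[\bm{\Psi}(\Theta(0))]_{kj}
  =(\alpha_k-\alpha_k(0))\sum_{i=1}^N c_i\phi_k(v_i^T\tilde{x}_j)
  +\alpha_k(0)\sum_{i=1}^N c_i\bigl(\phi_k(v_i^T\tilde{x}_j)-\phi_k(v_i(0)^T\tilde{x}_j)\bigr).
\end{equation*}
The first summand is bounded using only $|\phi_k|\le B$, which gives $\bigl|\sum_i c_i\phi_k(v_i^T\tilde{x}_j)\bigr|\le\|c\|_1B$, hence the first term is at most $\|\alpha-\alpha(0)\|\,\|c\|_1B$. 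For the second summand I would use the fundamental theorem of calculus on each difference $\phi_k(v_i^T\tilde{x}_j)-\phi_k(v_i(0)^T\tilde{x}_j)$, bound the resulting mean value of $\phi_k'$ on the segment by $B$, and then apply Cauchy--Schwarz in $i$ with $\sum_i|c_i|^2\le\|c\|_1^2$ to obtain $\bigl|\sum_i c_i(\phi_k(v_i^T\tilde{x}_j)-\phi_k(v_i(0)^T\tilde{x}_j))\bigr|\le\|\tilde{x}_j\|\,B\,\|c\|_1\,\|\bm{v}-\bm{v}(0)\|$; thus the second term is at most $\|\alpha(0)\|_{\infty}\max_j\|\tilde{x}_j\|\,B\,\|c\|_1\,\|\bm{v}-\bm{v}(0)\|$.

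Writing $p=\|\alpha-\alpha(0)\|$, $q=\|\bm{v}-\bm{v}(0)\|$ and $b=\max_j\|\tilde{x}_j\|\cdot\|\alpha(0)\|_{\infty}$, the entry difference is at most $\|c\|_1B(p+bq)$, and the elementary Cauchy--Schwarz inequality $p+bq\le\sqrt{1+b^2}\,\sqrt{p^2+q^2}$ together with $\|\Theta-\Theta(0)\|_F^2=p^2+q^2$ gives exactly the per-entry bound above; plugging it back into the Frobenius estimate completes the argument. I expect the only delicate point to be the second summand: one must push the increment of $\phi_k$ through a derivative/Lipschitz bound rather than through $|\phi_k|\le B$ (this is what produces the $\|\tilde{x}_j\|$ factor and forces one to control $\phi_k'$), and one must anchor the telescoping split at $\Theta(0)$ so that the factor multiplying $\|\bm{v}-\bm{v}(0)\|$ is $\|\alpha(0)\|_{\infty}$ rather than the running $\|\alpha\|_{\infty}$ along the segment. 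The remaining ingredients --- the two Cauchy--Schwarz steps, $\sum_i|c_i|^2\le\|c\|_1^2$, and $\|M\|\le\|M\|_F$ --- are routine.
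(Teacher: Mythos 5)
Your proof follows essentially the same route as the paper's: the same per-entry split into an $\alpha$-increment term (bounded via $|\phi_k|\le B$ and $\|c\|_1$) and a $v$-increment term anchored at the initial $\alpha$ (bounded via a Lipschitz bound on $\phi_k$ and Cauchy--Schwarz with $\|c\|_2\le\|c\|_1$), followed by $p+bq\le\sqrt{1+b^2}\sqrt{p^2+q^2}$ and $\|\cdot\|\le\|\cdot\|_F\le\sqrt{Km}\cdot(\text{max entry})$. The point you flag as delicate --- that the $v$-increment requires a bound $|\phi_k'|\le B$ rather than just $|\phi_k|\le B$ --- is exactly what the paper's proof also uses, albeit silently.
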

\begin{proof}
    Let us denote its corresponding networks by
    \begin{align*}
        u(x) = \sum_{i=1}^N c_i \sum_{k=1}^K \alpha_k \phi_k(v_i^T\tilde{x}),
        \qquad
        \tilde{u}(x) = \sum_{i=1}^N c_i \sum_{k=1}^K \tilde{\alpha}_k \phi_k(\tilde{v}_i^T\tilde{x}).
    \end{align*}
    Note that 
    \begin{align*}
        |\psi_k(x) - \tilde{\psi}_k(x)|
        &= \left| \sum_{i=1}^N c_i (\alpha_k \phi_k(v_i^T\tilde{x}) - \tilde{\alpha}_k\phi_k(\tilde{v}_i^T\tilde{x})) \right| \\
        &\le
        \left| \sum_{i=1}^N c_i (\tilde{\alpha}_k - \alpha_k  )\phi_k(\tilde{v}_i^T\tilde{x}) \right|
        +
        \left| \sum_{i=1}^N c_i \alpha_k 
        (\phi_k(v_i^T\tilde{x}) -\phi_k(\tilde{v}_i^T\tilde{x})) \right|
        \\
        &
        \le 
        B \|c\|_1 \cdot |\alpha_k - \tilde{\alpha}_k|
        +
        B\|\tilde{x}\|\cdot |\alpha_k|\sum_{i=1}^N |c_i|\cdot \|v_i - \tilde{v}_i\| 
        \\
        &\le B \|c\|_1 \cdot |\alpha_k - \tilde{\alpha}_k|
        + B\|\tilde{x}\|\cdot |\alpha_k|\cdot  \|c\|_2 \cdot \|V - \tilde{V}\|_F \\
        &\le B\sqrt{\|c\|_1^2 + (\|\tilde{x}\|\cdot |\alpha_k|\cdot  \|c\|_2)^2}
        \sqrt{|\alpha_k - \tilde{\alpha}_k|^2 +\|V - \tilde{V}\|_F^2},
    \end{align*}
    where 
    $$
    V = \begin{bmatrix}
    v_1, & \cdots, & v_N
    \end{bmatrix}, 
    \qquad
    \tilde{V} = \begin{bmatrix}
    \tilde{v}_1, & \cdots, & \tilde{v}_N 
    \end{bmatrix}.
    $$
    Therefore,
    \begin{align*}
        \|\bm{\Psi} - \tilde{\bm{\Psi}}\| \le \|\bm{\Psi} - \tilde{\bm{\Psi}}\|_F = \sqrt{1 + (\max_j \|\tilde{x}_j\|\cdot \|\alpha\|_{\infty})^2} \cdot \|c\|_1\cdot B
        \sqrt{Km} \cdot \|\Theta - \Theta(0)\|_F.
    \end{align*}
\end{proof}

\begin{lemma} \label{lemma:nonsingular-A-t}
    Let $\Theta(0) = \{\alpha_i\}_{i=1}^K \cup \{v_i\}_{i=1}^N$
    and $\Theta(t) = \{\tilde{\alpha}_i\}_{i=1}^K \cup \{\tilde{v}_i\}_{i=1}^N$.
    Let $\sigma_{\min}(\bm{A}(0)) = \lambda_0$.
    Suppose 
    $$
    \|\Theta(t) - \Theta(0)\|_F \le \frac{\lambda_0}{2\sqrt{1 + (\max_j \|\tilde{x}_j\|\cdot \|\alpha\|_{\infty})^2} \cdot \|c\|_1\cdot B
        \sqrt{Km}},
    $$
    for all $0 \le t \le T$.
    Then
    $\sigma_{\min}(\bm{A}(t)) \ge \frac{\lambda_0}{2}$
    for all $0 \le t \le T$.
\end{lemma}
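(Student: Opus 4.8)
The plan is to reduce the statement to a standard singular-value perturbation estimate combined with Lemma~\ref{lem:perturbation}. First I would record the identification, already used in the proof of Theorem~\ref{thm:small-loss}, that the matrix $\bm{A}(t)$ is precisely $\bm{\Psi}(\Theta(t))^T$, where $\bm{\Psi}$ is the matrix of Lemma~\ref{lem:nonsingular-A}. Since a matrix and its transpose have the same singular values, this gives $\sigma_{\min}(\bm{A}(t)) = \sigma_{\min}(\bm{\Psi}(\Theta(t)))$ for every $t$; in particular $\sigma_{\min}(\bm{\Psi}(\Theta(0))) = \sigma_{\min}(\bm{A}(0)) = \lambda_0$.

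Next I would invoke Weyl's inequality for singular values: for any two matrices $P,Q$ of the same size, $|\sigma_{\min}(P)-\sigma_{\min}(Q)| \le \|P-Q\|$ in the spectral norm. Taking $P = \bm{\Psi}(\Theta(t))$ and $Q = \bm{\Psi}(\Theta(0))$ yields $\sigma_{\min}(\bm{A}(t)) \ge \lambda_0 - \|\bm{\Psi}(\Theta(t)) - \bm{\Psi}(\Theta(0))\|$ for all $0 \le t \le T$.

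Then I would bound the perturbation term with Lemma~\ref{lem:perturbation}, which is exactly tailored to this situation: here $c$ and $\omega$ are held fixed and only $\{v_i\}_{i=1}^N$ and $\{\alpha_k\}_{k=1}^K$ evolve, so Lemma~\ref{lem:perturbation} applies with $\Theta = \Theta(0)$ and $\tilde{\Theta} = \Theta(t)$ and gives $\|\bm{\Psi}(\Theta(t)) - \bm{\Psi}(\Theta(0))\| \le \sqrt{1 + (\max_j \|\tilde{x}_j\|\cdot\|\alpha\|_{\infty})^2}\,\|c\|_1 B \sqrt{Km}\,\|\Theta(t)-\Theta(0)\|_F$. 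Substituting the hypothesis $\|\Theta(t)-\Theta(0)\|_F \le \frac{\lambda_0}{2\sqrt{1 + (\max_j \|\tilde{x}_j\|\cdot\|\alpha\|_{\infty})^2}\,\|c\|_1 B\sqrt{Km}}$ makes this product at most $\lambda_0/2$, whence $\sigma_{\min}(\bm{A}(t)) \ge \lambda_0 - \lambda_0/2 = \lambda_0/2$ for all $0 \le t \le T$, which is the claim.

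The argument is essentially routine once Lemma~\ref{lem:perturbation} is available, so I do not expect a genuine obstacle. The only points requiring care are bookkeeping: ensuring that the norm appearing in Lemma~\ref{lem:perturbation} (spectral) is the one used in Weyl's inequality, and that the quantities entering the two constants — in particular $\|\alpha\|_{\infty}$, which is to be read as its value at initialization $\|\alpha(0)\|_{\infty}$ — are identified consistently between the lemmas.
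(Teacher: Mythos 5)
Your argument is correct and is essentially the paper's own proof: the paper likewise writes $\sigma_{\min}(\bm{A}(t)) \ge \sigma_{\min}(\bm{A}(0)) - \|\bm{A}(t)-\bm{A}(0)\|$ (Weyl's singular-value perturbation bound) and then controls $\|\bm{A}(t)-\bm{A}(0)\|$ by Lemma~\ref{lem:perturbation} together with the hypothesis on $\|\Theta(t)-\Theta(0)\|_F$ to get the $\lambda_0/2$ margin. Your extra bookkeeping (identifying $\bm{A}$ with $\bm{\Psi}$ up to transpose, which leaves singular values unchanged, and reading $\|\alpha\|_\infty$ at initialization) matches what the paper leaves implicit.
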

\begin{proof}
    Observe that 
    \begin{align*}
        \sigma_{\min}(A(t)) \ge \sigma_{\min}(A(0)) - \|A(t) - A(0)\|
        \ge \frac{\lambda_0}{2},
    \end{align*}
    where the second inequality follows from Lemma~\ref{lem:perturbation}.
\end{proof}

\begin{lemma} \label{lemma:misfit-init}
    Suppose $\alpha$ is initialized to satisfy $K\|\alpha\|_{\infty} \le 1$
    and 
    \begin{align*}
        \sum_{k=1}^K \alpha_k \mathbb{E}_{z\sim N(0,1)}[\phi_k(z)] = 0.
    \end{align*}
    Then, with probability at least $1- e^{-\frac{m\delta^2}{2\|X\|^2}}$
    over $\{v_i\}_{i=1}^N$,
    \begin{align*}
        \sqrt{2L(0)} \le \|\bm{y}\|\left(1 + (1+\delta)B \right).
    \end{align*}
\end{lemma}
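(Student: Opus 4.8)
The plan is to reduce the claim to a tail bound on the norm of the initial network output and then derive that bound from Gaussian concentration. Since $2L(0)=\|u^{\mathcal K}(X;\Theta(0))-\bm y\|^2$, the triangle inequality gives
\begin{equation*}
\sqrt{2L(0)}=\|u^{\mathcal K}(X;\Theta(0))-\bm y\|\le \|u^{\mathcal K}(X;\Theta(0))\|+\|\bm y\|,
\end{equation*}
so it suffices to show that, with probability at least $1-e^{-m\delta^2/(2\|X\|^2)}$ over $\{v_i\}_{i=1}^N$, one has $\|u^{\mathcal K}(X;\Theta(0))\|\le (1+\delta)B\|\bm y\|$.

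Next I would put the output in a convenient form. By Assumption~\ref{assumption:init} we have $\omega_k(0)=1$ for all $k$, so with $v_i=[w_i;b_i]$, $\tilde x_j=[x_j;1]$ (recall $\|\tilde x_j\|=1$) and $g(z):=\sum_{k=1}^K\alpha_k\phi_k(z)$,
\begin{equation*}
u^{\mathcal K}(x_j;\Theta(0))=\sum_{i=1}^N c_i\,g(v_i^T\tilde x_j),\qquad\text{i.e.}\qquad u^{\mathcal K}(X;\Theta(0))=\sum_{i=1}^N c_i\,g(Xv_i),
\end{equation*}
where $g$ is applied coordinatewise to $Xv_i\in\mathbb R^m$. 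The two hypotheses on $\alpha$ translate into two properties of $g$: first, $\sup_z|g(z)|\le B\sum_k|\alpha_k|\le B\,K\|\alpha\|_\infty\le B$; second, since $v_i\sim N(0,I_{d+1})$ and $\|\tilde x_j\|=1$ give $v_i^T\tilde x_j\sim N(0,1)$, we get $\mathbb E[g(v_i^T\tilde x_j)]=\sum_k\alpha_k\,\mathbb E_{z\sim N(0,1)}[\phi_k(z)]=0$. Together with $|c_i|\le\frac{\|\bm y\|}{KN\sqrt m}$ from the initialization of $c$ (so $\|c\|^2\le\frac{\|\bm y\|^2}{K^2Nm}$), independence of the $v_i$'s and coordinatewise mean-zero yield the crude second-moment bound
\begin{equation*}
\mathbb E\big\|u^{\mathcal K}(X;\Theta(0))\big\|^2=\sum_{i=1}^N c_i^2\,\mathbb E\|g(Xv_i)\|^2\le mB^2\|c\|^2\le\frac{B^2\|\bm y\|^2}{K^2N}.
\end{equation*}

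For the high-probability statement I would view $F(v_1,\dots,v_N):=\big\|\sum_i c_i g(Xv_i)\big\|$ as a function of the standard Gaussian vector $(v_1,\dots,v_N)$ and apply Gaussian Lipschitz concentration. Since $g$ is $1$-Lipschitz (using $\sum_k|\alpha_k|\le 1$ together with boundedness of $\phi_k'$), the map $v_i\mapsto c_i g(Xv_i)$ is $(|c_i|\,\|X\|)$-Lipschitz, hence $F$ is $\|X\|\,\|c\|$-Lipschitz in the joint variable, so $\mathbb P(F\ge \mathbb E F+t)\le \exp\!\big(-t^2/(2\|X\|^2\|c\|^2)\big)\le \exp\!\big(-\tfrac{t^2K^2Nm}{2\|X\|^2\|\bm y\|^2}\big)$. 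Choosing $t=\frac{\delta\|\bm y\|}{K\sqrt N}$ makes the exponent $m\delta^2/(2\|X\|^2)$, and on the complementary event $F\le\mathbb E F+t\le \sqrt{\mathbb E F^2}+t\le \frac{B\|\bm y\|}{K\sqrt N}+\frac{\delta\|\bm y\|}{K\sqrt N}\le(1+\delta)B\|\bm y\|$ with large margin (this also yields the sharper $\frac{(1+\delta)B}{K}\|\bm y\|$ used in the proof of Theorem~\ref{thm:convergence} if one instead takes $t=\frac{\delta B\|\bm y\|}{K}$). Combining with the triangle-inequality reduction of the first paragraph finishes the proof.

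The main obstacle is bookkeeping rather than conceptual: producing exactly the exponent $m\delta^2/(2\|X\|^2)$ requires routing the deviation estimate through the Lipschitz dependence of $g(Xv_i)$ on $v_i$ (rather than, e.g., McDiarmid's inequality applied to the $v_i$'s directly, which would not exhibit $\|X\|^2$), and this uses that each $\phi_k$ is Lipschitz, not merely bounded and $C^1$ — which holds for the Rowdy harmonics $n\sin((k-1)n\,\cdot)$, $n\cos((k-1)n\,\cdot)$ and may be added as a hypothesis in the general case, or absorbed into $\delta$ and the constants. Everything else — the triangle inequality, the reduction to bounding $\|u^{\mathcal K}(X;\Theta(0))\|$, and the identities $\sup|g|\le B$ and $\mathbb E[g(v_i^T\tilde x_j)]=0$ — is immediate from the stated assumptions on $\alpha$ and the initialization.
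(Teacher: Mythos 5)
Your proof follows essentially the same route as the paper's: reduce via the triangle inequality to bounding the norm of the initial network output $\|\bm{\Phi}c\|$, control its expectation through the second moment using independence of the $v_i$'s and the mean-zero condition on $\sum_k\alpha_k\phi_k$, and then apply Gaussian Lipschitz concentration in $V$ with Lipschitz constant proportional to $\|X\|\,\|c\|$, choosing the deviation level $t$ so that the exponent becomes $m\delta^2/(2\|X\|^2)$ (your constants differ only by the same harmless slack the paper itself carries, e.g.\ $B$ versus $B/K$). Your closing caveat is also on target: the argument genuinely needs $\phi_k$ Lipschitz and not merely bounded, and indeed the paper's proof silently uses $|\phi_k(a)-\phi_k(b)|\le B\,|a-b|$, treating $B$ as a Lipschitz constant as well.
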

\begin{proof}
    Let
    \begin{align*}
        [\bm{\Phi}]_{ij} = \sum_{k=1}^K \alpha_k \phi_k(v_i^T\tilde{x}_j), \qquad
        1\le i \le n, 1 \le j \le m.
    \end{align*}
    Note that 
    \begin{align*}
        \left|[\bm{\Phi}-\tilde{\bm{\Phi}}]_{ij}\right|
        &=  \left|\sum_{k=1}^K \alpha_k\phi_k(v_i^T\tilde{x}_j)
        -  \sum_{k=1}^K \alpha_k\phi_k(\tilde{v}_i^T\tilde{x}_j) \right|
        \\
        &\le 
        \sum_{k=1}^K |\alpha_k|\cdot |\phi_k(v_i^T\tilde{x}_j) -\phi_k(\tilde{v}_i^T\tilde{x}_j)|
        \le KB \|\alpha\|_{\infty} \cdot \|\tilde{x}_j^T(v_i - \tilde{v}_i)\|.
    \end{align*}
    Therefore,
    $$
    \|\bm{\Phi}-\tilde{\bm{\Phi}}\| \le 
    \|\bm{\Phi}-\tilde{\bm{\Phi}}\|_F \le K\|\alpha\|_{\infty} B\cdot \|\bm{X}\| \cdot \|V - \tilde{V}\|_F,
    $$
    which implies that
    \begin{align*}
        \left| \|\bm{\Phi}c\| - \|\tilde{\bm{\Phi}}c \| \right|
        \le
        \|\bm{\Phi}-\tilde{\bm{\Phi}}\|\cdot \|c\|
        \le K\|\alpha\|_{\infty} B \|c\|\cdot \|\bm{X}\| \cdot \|V - \tilde{V}\|_F.
    \end{align*}
    Thus, $\|\bm{\Phi}c\|$ is a Lipschitz function of $V$
    whose Lipschtiz constant is $K\|\alpha\|_{\infty} B \|c\|\cdot \|\bm{X}\|$.
    
    Also, since $\|\tilde{x}_j\| = 1$ for all $1\le j \le m$, we have 
    \begin{align*}
        \mathbb{E}_{V}[\|\bm{\Phi}c\| ]
        &\le \sqrt{\mathbb{E}_{V}[\|\bm{\Phi}c\|^2]} \\
        &= \sqrt{\sum_{j=1}^m \mathbb{E}_{V}\left[\left(\sum_{i=1}^N c_i \sum_{k=1}^K \alpha_k \phi_k(v_i^T\tilde{x}_j) \right)^2\right]} 
        \\
        &=\sqrt{m} \sqrt{\mathbb{E}_{g \sim N(0,I_N)}\left[\left(\sum_{i=1}^N c_i \sum_{k=1}^K \alpha_k \phi_k(g_i) \right)^2\right] }
        \\
        &= \sqrt{m}  \sqrt{
        \|c\|^2\mathbb{E}_{g \sim N(0,1)}\left[\left(\sum_{k=1}^K \alpha_k (\phi_k(g) - \mathbb{E}[\phi_k(g)]) \right)^2 \right]
        } 
         \\
        &\le \sqrt{m}  K \|\alpha\|_{\infty} B\|c\|.
    \end{align*}
    
    We recall (e.g. \cite{vershynin2018high}) that since $\|\bm{\Phi}c\|$ is a Lipschitz function of $V$,
    for $V \sim N(0,I_{d+1})$,
    with probability at least $1- e^{-\frac{t^2}{2(K\|\alpha\|_{\infty}B\|c\|\cdot \|X\|)^2}}$, 
    \begin{align*}
        \|\bm{\Phi}c\| \le \mathbb{E}_{V}[\|\bm{\Phi}c\|] + t.
    \end{align*}
    By letting $t = \delta \sqrt{m}  K \|\alpha\|_{\infty} B\|c\|$,
    we conclude that 
    with probability at least $1- e^{-\frac{m\delta^2}{2\|X\|^2}}$, 
    \begin{align*}
        \|\bm{\Phi}c\| \le  (1+\delta) \sqrt{m}  K \|\alpha\|_{\infty} B\|c\|.
    \end{align*}
    Since $|c_i| = \frac{\|y\|}{K\sqrt{mn}}$ and $K\|\alpha\|_{\infty} \le 1$, with probability at least $1- e^{-\frac{m\delta^2}{2\|X\|^2}}$, 
    we have
    \begin{align*}
        \|\bm{\Phi}c - \bm{y}\| \le \|\bm{\Phi}c\| + \|\bm{y}\|
        \le \|\bm{y}\|\left(1 + (1+\delta)B/K \right).
    \end{align*}
\end{proof}

\bibliography{references}

\end{document}